\begin{document}
\newtheorem{lemma}{Lemma}
\newtheorem{proposition}{Proposition}
\newtheorem{definition}{Definition}
\newtheorem{theorem}{Theorem}
\newtheorem{assumption}{Assumption}
\title{Learning Aligned Stability in Neural ODEs Reconciling Accuracy with Robustness}

\author{ Chaoyang Luo, Yan Zou, Nanjing Huang{*}
	\thanks{*Corresponding author: Nanjing Huang (e-mail: njhuang@scu.edu.cn; nanjinghuang@hotmail.com).}
	\thanks{Chaoyang Luo and Nanjing Huang are with the Department of Mathematics, Sichuan University, Sichuan, 610064, China (e-mail: luochaoyang@stu.scu.edu.cn).}
	\thanks{Yan Zou is with the Department of Artificial Intelligence and Computer Science, Yibin University, Sichuan, 644000, China (e-mail: zouyan@yibinu.edu.cn) }
	}
\maketitle
\markboth{Luo \MakeLowercase{\textit{et al.}}: Learning Aligned Stability in Neural ODEs Reconciling Accuracy with Robustness}{IEEE TRANSACTIONS ON PATTERN ANALYSIS AND MACHINE INTELLIGENCE,~Vol.~14, No.~8, August~2021}%


\begin{abstract}
Despite Neural Ordinary Differential Equations (Neural ODEs) exhibiting intrinsic robustness, existing methods often impose Lyapunov stability for formal guarantees. However, these methods still face a fundamental accuracy-robustness trade-off, which stems from a core limitation: their applied stability conditions are rigid and inappropriate, creating a mismatch between the model's regions of attraction (RoAs) and its decision boundaries. To resolve this, we propose Zubov-Net, a novel framework that unifies dynamics and decision-making. We first employ learnable Lyapunov functions directly as the multi-class classifier, ensuring the prescribed RoAs (PRoAs, defined by the Lyapunov functions) inherently align with a classification objective.  Then, for aligning prescribed and true regions of attraction (PRoAs-RoAs), we establish a Zubov-driven stability region matching mechanism by reformulating Zubov's equation into a differentiable consistency loss.  Building on this alignment, we introduce a new paradigm for actively controlling the geometry of RoAs by directly optimizing PRoAs to reconcile accuracy and robustness. Our approach uses tripartite losses  (consistency, classification, separation) and a parallel boundary sampling algorithm to co-optimize the Neural ODE and Lyapunov function. To enhance the discrimination of Lyapunov functions, we design a Partially  Input-Attention-based Convex Neural Network via a softmax attention mechanism that focuses on equilibrium-relevant features and serves as weight normalization to maintain training stability in deep architectures. Theoretically, we prove that minimizing the tripartite loss guarantees consistency alignment of PRoAs-RoAs, non-overlapping PRoAs, trajectory stability, and a certified robustness margin. Moreover, we establish stochastic convex separability with tighter probability bounds and lower dimensionality requirements to justify the convex design in Lyapunov functions. Experimentally, on SVHN, CIFAR-10, CIFAR-100, and Tiny-ImageNet, Zubov-Net maintains high clean accuracy while demonstrating superior robustness against diverse stochastic corruptions and both white-box and black-box adversarial attacks. The framework also demonstrates synergy with adversarial training, and its computational efficiency confirms practical feasibility.

\end{abstract}

\begin{IEEEkeywords}
Neural ODEs, Adversarial robustness, Zubov’s theorem, Neural Lyapunov functions.
\end{IEEEkeywords}
\section{Introduction} \label{sec:intro}
\IEEEPARstart{R}{ecent} advances in machine learning, particularly deep neural networks (DNNs), have achieved remarkable success in fields such as computer vision \cite{11018619}, natural language processing \cite{10905032}, and robotics \cite{10363393}. However, the reliability of DNNs is challenged by their sensitivity to various input perturbations, including both adversarial attacks (maliciously crafted perturbations) and random noise corruptions, which can induce erroneous predictions and severely hinder deployment in safety-critical applications \cite{10530438,8611298}. This vulnerability raises an urgent need for architectures with enhanced robustness against perturbed inputs.

Neural ODEs, which model continuous-time dynamics, have demonstrated inherent robustness advantages over discrete-depth networks \cite{NEURIPS2018_69386f6b,9157003}. Consequently, significant research has explored their properties for robust learning and defense. For instance, Carrara et al. \cite{9035109} developed error tolerance tuning in ODE solvers; Yan et al. \cite{YAN2020On} proposed TisODE by leveraging non-intersecting integral curves; and Cui et al. \cite{CUI2023576} designed the half-Swish activation function for Neural ODEs.  Despite empirical improvements, these heuristic approaches lack formal robustness guarantees. Lyapunov stability from control theory has emerged as a critical framework for robustness certification.  It works by certifying stabilization conditions to ensure the system converges to a stable equilibrium from any state within a region of attraction (RoA).

Recent works have integrated Lyapunov stability into Neural ODEs in various forms \cite{pmlr-v162-rodriguez22a,huang2022fi,LUO2025107219}. For example, Rodriguez et al. \cite{pmlr-v162-rodriguez22a} proposed a training framework based on control-theoretic Lyapunov stability conditions as loss functions, which demonstrated advantages in convergence speed and robustness. Luo et al. \cite{LUO2025107219} introduced a fixed-time stability condition to ensure convergence within a predefined time. Huang et al. \cite{huang2022fi} enforced forward invariance by imposing stability constraints on sub-level sets of predetermined Lyapunov functions. These methods typically require a fixed predetermined region of attraction and impose stabilization conditions on the entire region to stabilize Neural ODEs. However, this often leads to over- or under-constrained stabilization, exacerbating the tension between accuracy and robustness.

\begin{figure*}[!t] 
	\centering
	\includegraphics[width=\textwidth]{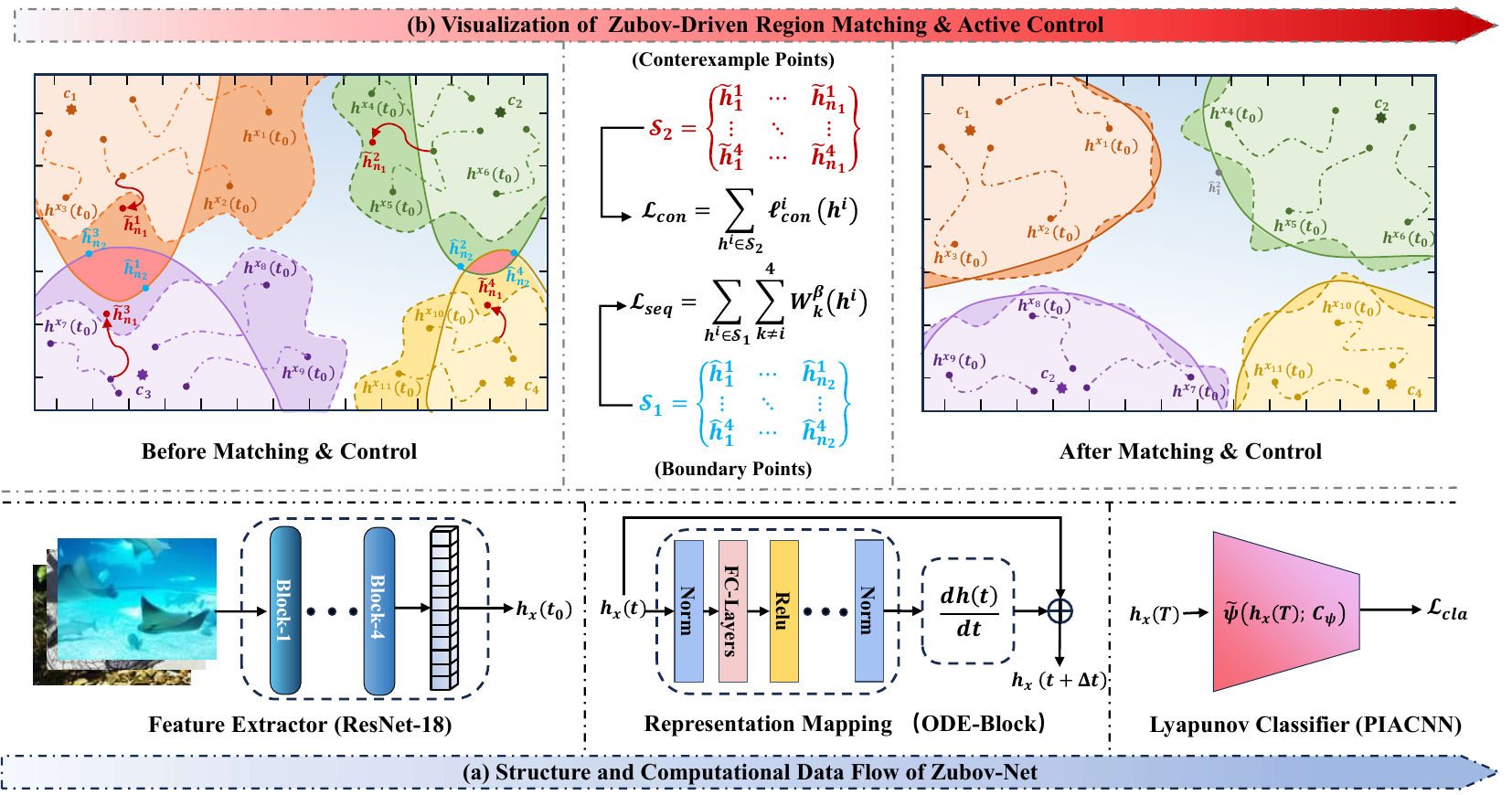}
	\caption{ The overall architecture and working principle of Zubov-Net. (a) Structure and computational data flow of Zubov-Net: ResNet18 serves as the feature extractor, a Neural ODE acts as the representation map, and a novel Lyapunov Classifier implemented by PIACNN. (b) Visualization of Zubov-Driven Region Matching \& Active Control: Solid lines depict the Prescribed Regions of Attraction (PRoAs), as defined by the Lyapunov classifier. Dashed lines represent the true dynamical Regions of Attraction (RoAs) of the Neural ODE. Light/Dark shades indicate aligned/misaligned areas, with light red highlighting conflicting overlaps between adjacent PRoAs. The process from left (``Before Matching \& Control'') to right (``After Matching \& Control'') shows how the Zubov-driven consistency loss (using counterexample points $\mathcal{S}_2$) aligns the PRoAs with the RoAs, while the separation loss (using boundary points $\mathcal{S}_1$) actively shapes their geometry to be well-separated.} 
	\label{fig:fullwidth-top}
\end{figure*}

In contrast, another line of work explicitly enforces stability through a projection-based mechanism. This approach, pioneered by Kolter et al. \cite{NEURIPS2019_0a4bbced}, constructs a learnable convex Lyapunov function and projects the dynamical field to satisfy the Lyapunov condition, thereby enforcing global asymptotic stability. This paradigm has been extended to Deep Equilibrium Models \cite{Chu_Wei_Liu_Zhao_Miyatake_2024} and Physics-Informed Neural Networks \cite{10.24963/ijcai.2024/428} for robust image classification. However, the projection mechanism can limit model expressivity. More critically, its “global projection stability” forces features from different classes to converge near the equilibrium point, inevitably shrinking the inter-class margins and undermining the discriminative power of the classifier \cite{NEURIPS2023_0a443a00}.

Alternatively, data-driven paradigms treat each clean training sample as an individual Lyapunov-stable equilibrium point. For instance, SODEF \cite{NEURIPS2021_7d5430cf} enforces convergence of perturbed inputs to their corresponding clean features, while ASODE \cite{NEURIPS2022_299a08ee} extends this to non-autonomous systems. This equilibrium-centric strategy imposes attraction constraints on every clean data point, overlooking the broader, class-wise attraction basins that should guide perturbed samples to the correct category’s region, not necessarily to each clean sample.

Despite their varied forms, existing methods share a fundamental limitation:  the stability conditions, whether imposed by a fixed PRoA, a global projection, or per-sample as equilibrium points, are rigid and inappropriate. These methods are unable to adapt to the complex and diverse discriminative structure learned from data. This inevitably creates a misalignment between the model’s regions of attraction (RoAs) and its logical decision regions, which is a direct cause of the trade-off between accuracy and robustness.

To overcome this misalignment, we propose Zubov-Net, a novel framework that unifies dynamics and decision-making. First, we employ learnable Lyapunov functions to serve a dual purpose: as a multi-class classifier and as the prescribed region of attraction (PRoA— i.e., the imposed stability region). This design ensures that the PRoA is inherently adaptable and can dynamically align with the learned data distribution. Second, to rigorously align the PRoA with the true region of attraction (RoA), we introduce a Zubov-driven stability region matching mechanism. This mechanism reformulates the core PDE of Zubov’s theorem into a differentiable consistency loss, thereby driving the RoA to align with the data’s discriminative structure.

Crucially, this matching enables a new paradigm: active RoA control. Instead of merely certifying an existing RoA, we actively control its geometry by optimizing the PRoA. In this paper, we optimize the PRoA with a classification loss to guide data-driven trajectories into the correct class's RoA and a separation loss to ensure well-separated RoAs, mitigating the accuracy-robustness tension. While Zubov’s theory has been used to learn Lyapunov functions for RoA approximation or verification \cite{liu2025physics,kang2023data,li2025twostage}, to our knowledge, this is the first work to leverage it for designing a Zubov-driven stability region matching mechanism and establishing an active RoA-control paradigm within Neural ODEs.

Our contributions are summarized as follows:
\begin{itemize}
	\item We propose Zubov-Net, a novel framework that unifies dynamics and decision-making through a learnable Lyapunov classifier. Within this unified architecture, we introduce a Zubov-driven stability region matching mechanism, which in turn enables an active RoA-control paradigm.
	
	\item To realize this unified architecture, we design a Partially Input-Attention-based Convex Neural Network (PIACNN) as the Lyapunov classifier, balancing convexity for stability with enhanced discrimination. To drive the region matching and control, we devise a tripartite loss and a parallel boundary sampling algorithm for efficient co-optimization.	
	
	\item Theoretically, we show that minimizing the tripartite losses ensures PRoAs-RoAs alignment, non-overlapping PRoAs, trajectory stability, and a certified robustness margin. We also provide a stochastic convex separability analysis to justify the convex design of the Lyapunov functions as a classifier.
	
	\item  Extensive experiments, on SVHN, CIFAR-10, CIFAR-100, and Tiny-ImageNet, show that Zubov-Net achieves high accuracy on clean data and significantly improves robustness under various stochastic noises and  white-box/black-box adversarial attacks. The framework's synergy with adversarial training  is further demonstrated.
\end{itemize}

\section{Background and Related Work} \label{sec:2}
This section outlines the relevant background and theoretical foundations, reviews core concepts in Neural ODEs and Lyapunov stability theory (with a focus on Zubov's theorem), and introduces related approaches that integrate stability theory into Neural ODEs for robust learning.

\subsection{Additional Details on Neural ODEs}
\noindent  \textbf{Neural ODEs.} It parameterizes the dynamics of hidden states $h(t) \in H \subset \mathbb{R}^{d_h}$ via a neural network $ f (\cdot;{\theta _f})$. Given a data pair $(x, y) \in \mathbb{D}$, the model is defined as
\begin{align}
	&h_{x}(0)=\phi \left( {x;{\theta _\phi }} \right),  \label{eq:2.1} \\
	&h_{x}(T) = h_x(0)+\int_0^T f (h_x(s);{\theta _f}){\mkern 1mu} ds, \label{eq:2.2}\\
	&\hat{y}_{x} = \psi \left( {h_{x}\left( T \right);{C _\psi }} \right),  \label{eq:2.3}
\end{align}
where \( \phi : \mathbb{R}^{d_x} \to \mathbb{R}^{d_h} \) is the feature extractor and \( \psi : \mathbb{R}^{d_h} \to \mathbb{R}^{d_y} \) is the classifier for image classification with $L$ categories.The system evolves over a time interval $[0, T]$ to produce the final prediction $\hat{y}_x$.

Compared to prior works that employ a linear classifier, Zubov-Net constructs its classifier directly from a learnable Lyapunov function. Indeed, designing an effective multi-class classifier with convex Lyapunov functions presents a significant challenge. While convexity-based approaches have been explored for certified robustness in binary classification \cite{NEURIPS2023_a45b205c}, extending them to a general multi-class setting remains an open problem. Addressing this gap is also a core contribution of our method, as detailed in Section~\ref{sec:3}.

\noindent  \textbf{Model Assumption.} We make the following Lipschitz continuity assumption on neural networks used in Neural ODEs.

\begin{assumption} \label{as:1}
	For any neural network described by \( F( \cdot; \theta_F) :\mathbb{R}^{d_x} \to \mathbb{R}^{d_F} \) with parameter \( \theta_F \), there is a  constant \( L_F > 0 \) such that $
		\left\| F(x; \theta_F) - F(x'; \theta_F) \right\| \leq L_F \left\| x - x' \right\| $ for all \( x, x' \in \mathbb{R}^{d_x} \).
\end{assumption}

This assumption is generally satisfied since neural networks with activation functions (such as tanh, ReLU, and sigmoid functions) generally satisfy the Lipschitz continuity condition \cite{Latorre2020Lipschitz,oh2024stable}. Moreover, we assume that the state space \( H \subset \mathbb{R}^n \) is compact and path-connected, which is crucial for ensuring the existence of a Lyapunov function.

\subsection{Zubov’s Theorem}

For an autonomous nonlinear system in Equation \eqref{eq:2.2}, we set $  \mathcal{A} \subseteq H$ as an invariant set, i.e., \( h_0 \in \mathcal{A} \) implies \( \omega(t, h_0) \in \mathcal{A} \) for all \( t \geq 0 \). Let ${\left\| h \right\|_\mathcal{A}}$ denote the distance from a point $h$ to the set $\mathcal{A}$. Then, the RoA associated with $\mathcal{A}$ is defined as
\begin{equation*}
	\mathcal{D}_{f}(\mathcal{A}) := \left\{ h \in H : \lim_{t \to \infty} {\left\| {{\omega _{{\theta _f}}}(t,h)} \right\|_\mathcal{A}}= 0 \right\}.
\end{equation*}

A continuously differentiable function $\mathcal{V}: H \rightarrow \mathbb{R}$ is called a Lyapunov function for the invariant set $\mathcal{A}$ if it is positive definite, i.e., $\mathcal{V}(h) = 0$ for $h \in \mathcal{A}$ and $\mathcal{V}(h) > 0$ for all $h \notin \mathcal{A}$.

We are interested in utilizing a learnable Lyapunov function to characterize the RoA. This is achieved using sub-level sets (defined as \( \mathcal{V}_c := \{ x \in X : \mathcal{V}(x) < c \} \), where \( c > 0 \) ) to internally approximate the RoA. Therefore, we state Zubov's theorem, which indicates that the RoA is equivalent to the sub-level-1 set of a certain Lyapunov function.

\begin{lemma}[Zubov's Theorem \cite{zubov1961methods}] \label{lem:2.2}
	Let \( W : H \rightarrow [0,1] \) be a continuous function and \( D_W:= \{ h \in H  : W(h) < 1 \} \). Then \( D_W = \mathcal{D}_{f}(\mathcal{A})\) if and only if  the following conditions hold
	\begin{enumerate}
		\item For any sufficiently small \( c_1 > 0 \) and $h \in D$, there exists \( c_2 \in[0,1] \) such that \( {\left\| h \right\|_\mathcal{A}} \geq c_1 \) implies \( W(h) > c_2 \).
		\item \( W(h) \rightarrow 1 \) as \( h \rightarrow \partial D \) and \( W(h) = 0 \) for $h \in \mathcal{A}$.
		\item For any $h \in D$, the derivative of \( W \) along solutions of Equation \eqref{eq:2.2} satisfy
		\begin{equation}  \label{eq:2.4}
			\dot{W}(h) = -\Phi(h)(1 - W(h)),
		\end{equation}
		where \( \Phi : D \rightarrow \mathbb{R} \) continuous and positive definite with respect to \( \mathcal{A}\).
	\end{enumerate}
\end{lemma}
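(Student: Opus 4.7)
The plan is to bridge Zubov's PDE with the classical maximal-Lyapunov characterization in Lemma \ref{lem:2.1} via the substitution $V(h) := -\ln(1-W(h))$, which converts the Zubov right-hand side $-\Phi(h)(1-W(h))$ into the plain decay $-\Phi(h)$. This reduces the "if" direction to Lemma \ref{lem:2.1} and allows an integral construction for the "only if" direction.

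For sufficiency ($\Leftarrow$), I would assume $W$ satisfies conditions (1)--(3) and work on $D$. Define $V(h) := -\ln(1-W(h))$, which is continuous on $D$ because $W<1$ there. Condition (2) yields $V(h)=0$ for $h\in\mathcal{A}$ and $V(h)\to\infty$ as $h\to\partial D$, and condition (1) promotes positive-definiteness of $W$ with respect to $\mathcal{A}$ into positive-definiteness of $V$. Differentiating along solutions of \eqref{eq:2.2} and applying (3) yields
\begin{equation*}
\dot V(h)=\frac{\dot W(h)}{1-W(h)}=-\Phi(h),
\end{equation*}
so $V$ meets all three hypotheses of Lemma \ref{lem:2.1} with the same $\Phi$. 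Invoking Lemma \ref{lem:2.1} gives $D=\mathcal{D}_f(\mathcal{A})$.

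For necessity ($\Rightarrow$), assume $D=\mathcal{D}_f(\mathcal{A})$. By a converse-Lyapunov/Massera-type construction (and the compactness of $H$ together with Assumption \ref{as:1}), one can select a continuous, positive-definite $\Phi:D\to\mathbb{R}_{\geq 0}$ for which the integral $I(h_0):=\int_0^\infty \Phi(\omega_{\theta_f}(s,h_0))\,ds$ converges for every $h_0\in D$ and diverges as $h_0$ approaches $\partial D$. Then set
\begin{equation*}
W(h_0):=1-\exp\bigl(-I(h_0)\bigr)\quad\text{on }D,\qquad W(h_0):=1\quad\text{on }H\setminus D.
\end{equation*}
The semigroup property $\omega(s,\omega(t,h_0))=\omega(s+t,h_0)$ gives $I(\omega(t,h_0))=I(h_0)-\int_0^t\Phi(\omega(s,h_0))ds$, and differentiating $W$ at $t=0$ delivers exactly Zubov's equation \eqref{eq:2.4}. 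Positive-definiteness of $\Phi$ propagates to condition (1), while divergence of $I$ at $\partial D$ yields $W\to 1$ there.

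The main obstacle I anticipate is the necessity direction, specifically showing that $W$ is continuous on all of $H$ and that $W(h_0)\to 1$ as $h_0\to\partial D$. Continuity on $D$ follows from continuous dependence of the ODE flow on initial conditions together with uniform integrability of $\Phi$ along orbits, but these two ingredients must be balanced: $\Phi$ has to decay fast enough along each orbit for $I$ to converge, yet be bounded below on compact sets away from $\mathcal{A}$ so that $I(h_0)\to\infty$ as $h_0\to\partial D$ (since such orbits linger arbitrarily long outside any neighborhood of $\mathcal{A}$). The standard remedy, which I would adopt, is to choose $\Phi(h)=\varphi(\|h\|_\mathcal{A})\,g(h)$ with $\varphi$ a class-$\mathcal{K}$ function and $g$ a suitable positive continuous weight; verifying uniform convergence of $I$ on compact subsets of $D$ via Gr\"onwall-type bounds derived from Assumption \ref{as:1} is the delicate technical step.
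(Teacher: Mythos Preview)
The paper does not supply its own proof of Lemma~\ref{lem:2.2}; it is quoted as Zubov's classical theorem with a citation to \cite{zubov1961methods}, so there is no in-paper argument to compare your proposal against. That said, your strategy is precisely the standard one and is implicitly endorsed by the paper: the substitution $V(h)=-\ln(1-W(h))$ is exactly the inverse of the transformation the authors write down in Equation~\eqref{eq:2.5}, and they remark that technical details connecting $W$ and $\mathcal{V}$ can be found in \cite{liu2025physics,liu2023towards,kang2023data}. Your sufficiency direction via Lemma~\ref{lem:2.1} is clean and correct, and your sketch of necessity (integral construction of $I(h_0)$, semigroup identity, then $W=1-e^{-I}$) is the textbook route; the caveats you flag about continuity at $\partial D$ and the choice of $\Phi$ are the genuine technical content of Zubov's original proof and are accurately identified.
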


However, it requires designing a learnable function $W(\cdot; \theta_W)$ satisfying conditions 1) and 2), which is a non-trivial task. Interestingly, we can choose a suitable transformation function to construct a function $W(\cdot; \theta_W)$ from a candidate Lyapunov function $\mathcal{V}$ with $\mathcal{V}(h)=0$ iff $h \in \mathcal{A}$ as follows
\begin{equation} \label{eq:2.5}
	W\left( x \right) = 1 - \exp \left( { - \mathcal{V}\left( x \right)} \right),
\end{equation}
and more technical details can be found in \cite{liu2025physics,kang2023data}. Further, we can set $D_{W}$ (the sub-level-1 set of $W(\cdot;\theta_{W})$) as the PRoA.  

Moving beyond its traditional role as a static verification tool, we leverage Zubov's equation (Eq. \eqref{eq:2.4}) as a differentiable consistency characterization between the PRoA (defined by the Lyapunov classifier) and the true dynamical RoA of the Neural ODE. By enforcing this consistency during training, we establish a Zubov-driven stability region matching mechanism.

\subsection{Learning Stable Dynamics}
Research on stable Neural ODEs has increasingly incorporated stability theory, leading to methods that induce robustness by modifying the underlying dynamics. For instance, some approaches enforce stability through architectural biases like skew-symmetric structures \cite{pmlr-v145-huang22a} or contraction theory \cite{9809979}, while others apply post-hoc stabilization via minimal weight perturbations \cite{de2025improving}. Extensions to graph networks leverage fractional-order dynamics for tighter output bounds \cite{Kang_Zhao_Song_Xie_Zhao_Wang_She_Tay_2024,Cui_Kang_Li_Zhao_Tay_Deng_Li_2025} .  In these works, the stability mechanisms are treated as a separate, implicit property of the differential equation, often decoupled from the classification objective.  Zubov-Net, in contrast, introduces a paradigm that unifies dynamics and decision-making. It directly employs a learnable Lyapunov function as the classifier and actively aligns the prescribed stable regions (PRoAs) with the true dynamical regions of attraction (RoAs). This explicit, architecture-level unification ensures that the model's stability is inherently coupled with its classification objective.

Notably, Zhao et al. \cite{NEURIPS2023_0a443a00} critically demonstrate that naive Lyapunov stability is insufficient for multi-class robustness. They propose a Hamiltonian-based framework that implicitly defines attractors via a learned energy function.  In contrast, Zubov-Net introduces a principled, explicit alignment mechanism. This direct alignment offers a more fundamental resolution to the trade-off, moving beyond implicit stability guarantees toward actively controlled, decision-aware stable geometry.

\section{Methodology}\label{sec:3}

This section details Zubov-Net, a framework designed to address the stability-decision misalignment in Neural ODEs. As illustrated in Figure \ref{fig:fullwidth-top}, the core of our approach consists of three synergistic components: a unified architecture, a Zubov-driven region matching mechanism, and an active region of attraction (RoA) control paradigm. We first introduce the Partially Input-Attention-based Convex Neural Network (PIACNN), which instantiates the unified architecture by constructing a multi-class Lyapunov classifier. We then formulate a tripartite training objective to implement the matching and control mechanisms: (i) a consistency loss $\mathcal{L}_{con}$ that enforces the Zubov-driven alignment between prescribed and dynamical attraction regions; (ii) a classification loss $\mathcal{L}_{cla}$ that certifies trajectory convergence to the correct class basin; and (iii) a separation loss $\mathcal{L}_{sep}$ that actively shapes the basin geometry to maximize inter-class margins. Finally, we present the end-to-end training procedure, where these components are jointly optimized, as summarized in Algorithm \ref{alg:2}.

\subsection{The Framework of Zubov-Net}\label{sec:merhods_A}
Prior approaches to stabilizing Neural ODEs rely on a split architecture: stability constraints are applied to the feature dynamics, while final classification is performed by a separate linear layer.  This decouples the logical decision region from the underlying regions of attraction (RoA), creating a structural misalignment that inherently worsens the accuracy-robustness trade-off.

To resolve this, Zubov-Net introduces a unified architecture where learnable Lyapunov functions serve directly as the multi-class classifier, merging dynamics and decision-making. For $L$ classes, let $\mathcal{A} = {c_1, \dots, c_L}$ be the invariant set of class-specific equilibrium points. For each $c_i$, we construct a Lyapunov function $W_i(\cdot; \theta_W)$ from a learnable positive definite function $\mathcal{V}_i(\cdot; \theta_\mathcal{V})$ via Equation \eqref{eq:2.5}.

Implementing this unified architecture requires a classifier that is both a valid positive definite function  $\mathcal{V}$ (convex for positive definite) and sufficiently discriminative for multi-class problems. While Input Convex Neural Networks (ICNNs) are common for learning positive definite functions \cite{NEURIPS2019_0a4bbced, Chu_Wei_Liu_Zhao_Miyatake_2024}, they scale poorly to multi-class settings as they require $L$ independent networks. We therefore design a Partially Input-Attention-based Convex Neural Network (PIACNN), building upon the PICNN \cite{pmlr-v70-amos17b} and incorporating a novel input-attention mechanism. This mechanism enhances discrimination while preserving convexity about the input, and also acts as a weight normalization to prevent output explosion and stabilize deep architectures.

\begin{figure}[!t]
	\centering
	\includegraphics[width=\columnwidth]{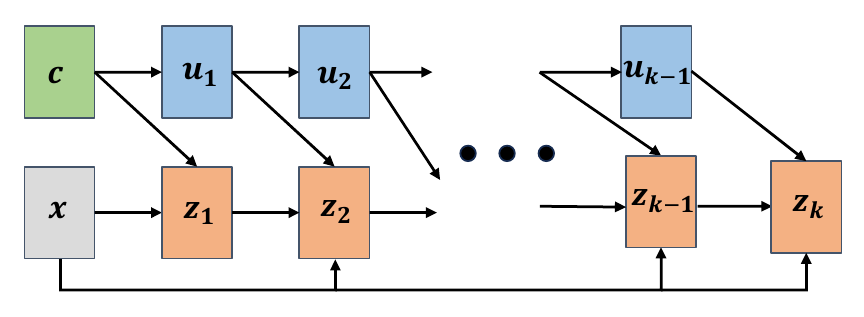}
	\vspace{-20pt}
	\caption{A partially input-attention-based convex neural network (PIACNN)}
	\vspace{-15pt}
	\label{fig:IACNN}
\end{figure}

Let $g(\cdot)$ denote a PIACNN whose architecture is detailed in Figure \ref{fig:IACNN}, and define the positive definite function for an equilibrium $c$ as
\begin{equation*} 
	\mathcal{V}(x,c;\theta_{\mathcal{V}}) = \sigma (g(x - c,c) - g(\mathbf{0},c)) + \delta \left\| {x - c} \right\|_2^2,
\end{equation*}
where $\sigma(\cdot)$ is a positive convex non-decreasing function with $\sigma (0) = 0$, and $\delta > 0$ is a small constant. The forward pass of the $k$-layer PIACNN is:
\begin{align}
	u_{i+1} &= \tilde{\sigma}_i(\tilde{A}_i u_i + \tilde{b}_i),  \label{eq:3.1}\\
	z_{i+1} &= \sigma_i \left( A_i^{(z)} \left( \rm{softmax}\left( A_i^{(zu)} u_i + b_i^{(z)} \right) \circ z_i \right) \right. \nonumber \\
	&\quad + \left. A_i^{(x)} \left( \rm{softmax}\left( A_i^{(xu)} u_i + b_i^{(x)} \right) \circ x \right) \right. \nonumber \\
	&\quad + \left. A_i^{(u)} u_i + b_i \right) \label{eq:3.2} , \\
	g(x, c) &= z_k, \quad u_0 = c , \quad z_0=x \label{eq:3.3},
\end{align}
where $u_i, z_i$ are hidden units, $\circ$ is the Hadamard product, and convexity in $x$ is preserved by enforcing $A_i^{(z)} \geq 0$ and using convex non-decreasing activations $\sigma_i(\cdot)$. The following proposition formalizes the convexity guarantee of PIACNN, with proof in the Appendix \ref{app:pro1}.
\begin{proposition} \label{propos:1}
	Let $A_i^{(z)}$ ($i = 0, \dots, k-1$) be nonnegative matrices and $\sigma_i(\cdot)$ convex non-decreasing activation functions. Then the $k$-layer PIACNN $g(x, c)$ defined in \eqref{eq:3.1}-\eqref{eq:3.3} is convex in $x$.
\end{proposition}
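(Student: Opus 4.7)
The plan is to mirror the standard proof for partially input convex neural networks (PICNN) from Amos et al., with the added observation that the softmax attention vectors act as fixed nonnegative constants in $x$. First I would record the structural fact that the $u$-path is entirely decoupled from $x$: since $u_0 = c$ and $u_{i+1} = \tilde{\sigma}_i(\tilde{A}_i u_i + \tilde{b}_i)$ does not involve $x$, every $u_i$ is a function of $c$ alone. Consequently, the quantities
\[
\alpha_i := \mathrm{softmax}(A_i^{(zu)} u_i + b_i^{(z)}), \qquad \beta_i := \mathrm{softmax}(A_i^{(xu)} u_i + b_i^{(x)}),
\]
are nonnegative vectors that do not depend on $x$, and $A_i^{(u)} u_i + b_i$ is likewise constant in $x$.

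The core argument is induction on $i$, showing that each component of $z_i$ is convex in $x$. The base case $z_0 = x$ is trivially (componentwise) convex and actually affine. For the inductive step, assume each component of $z_i$ is convex. Decompose the pre-activation argument of $\sigma_i$ in \eqref{eq:3.2} into three summands. The first, $A_i^{(z)}(\alpha_i \circ z_i)$, is a nonnegative linear combination of convex functions: the Hadamard product $\alpha_i \circ z_i$ scales each component of $z_i$ by the nonnegative scalar $\alpha_{i,j}$, preserving componentwise convexity, and then $A_i^{(z)} \ge 0$ elementwise takes a nonnegative combination of these convex components, again yielding convex components. The second summand, $A_i^{(x)}(\beta_i \circ x)$, is affine in $x$, since $\beta_i$ is constant in $x$. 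The third summand, $A_i^{(u)} u_i + b_i$, is constant in $x$. Summing, each component of the pre-activation is convex in $x$. Finally, applying $\sigma_i$ elementwise preserves convexity because $\sigma_i$ is convex and non-decreasing, and the composition of a convex non-decreasing scalar function with a convex function is convex. Hence each component of $z_{i+1}$ is convex in $x$, closing the induction. Setting $i = k$ gives that $g(x,c) = z_k$ is convex in $x$.

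The only step requiring genuine care, and the most likely source of confusion, is the treatment of the softmax gates: one might worry that the gating mechanism destroys convexity in the way that, say, multiplying two convex functions can. The resolution, which I would emphasize at the start of the inductive step, is precisely that the softmax inputs depend only on $u_i$, so the gates degenerate to fixed nonnegative scalars in the $x$-variable. Once that observation is in place, the remaining verifications (nonnegativity of $A_i^{(z)}$ preserving convexity, monotonicity of $\sigma_i$ preserving convexity under composition) are routine appeals to standard convex-analysis facts, and no quantitative estimate is needed.
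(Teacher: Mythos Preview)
Your proposal is correct and follows essentially the same approach as the paper's proof: both argue by induction on the layer index, observe that the softmax gates depend only on the $u$-path (hence are nonnegative constants in $x$), and then invoke the standard closure properties of convex functions under nonnegative combinations and composition with convex non-decreasing activations. Your exposition is, if anything, slightly more explicit in isolating the decoupling of the $u$-path up front.
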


A natural concern is whether a classifier based on convex functions possesses sufficient expressiveness. The following proposition provides a probabilistic guarantee, justifying our design choice theoretically. The proof is in the Appendix \ref{app:pro2}. 
\begin{proposition}\label{pro:convex-sep}
	Let $X_1 = \{x^{(i)}\}_{i=1}^N \subset [-1,1]^d$ and $X_2 = \{y^{(j)}\}_{j=1}^M \subset [-1,1]^d$ be i.i.d. uniform samples with $M \ge N$. The probability of convex separability satisfies:
	\begin{equation}
		\mathbb{P}\left(X_2 \cap \operatorname{co}(X_1) = \emptyset \right) \geq 
		\begin{cases} 
			{\left( {1 - {{\left( {\frac{{N - 1}}{{N + 1}}} \right)}^d}} \right)^M}& \forall d \in \mathbb{N}; \\
			1 & d \ge N+1. 
		\end{cases} \nonumber
	\end{equation}
	where $\operatorname{co}(\cdot)$ denotes the convex hull operation.
\end{proposition}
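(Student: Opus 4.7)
The plan is to split into the degenerate case $d \ge N+1$ and the main case, since the two regimes demand different geometric input. For the degenerate case, I would observe that any $N$ points in $\mathbb{R}^d$ have affine span of dimension at most $N-1 < d$, so $\operatorname{co}(X_1)$ lies in a lower-dimensional affine subspace and hence has Lebesgue measure zero in $\mathbb{R}^d$. Because each $y^{(j)}$ is drawn from an absolutely continuous distribution (uniform on $[-1,1]^d$), each $y^{(j)}$ almost surely avoids this null set, and the union over $j=1,\dots,M$ still has probability zero, giving the trivial bound $1$.

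For the general case, my main tool is a coordinate-wise relaxation. Fix $X_1$ and set $m_k = \min_i x^{(i)}_k$, $M_k = \max_i x^{(i)}_k$. The key geometric observation is that membership $y \in \operatorname{co}(X_1)$ forces each coordinate $y_k \in [m_k, M_k]$, because convex combinations preserve coordinate-wise intervals; this is necessary but not sufficient, which is exactly the slack that gives an upper bound on the membership probability. Using the independence of the coordinates of $y$, I get
\begin{equation*}
\mathbb{P}(y \in \operatorname{co}(X_1) \mid X_1) \;\le\; \prod_{k=1}^d \frac{M_k - m_k}{2}.
\end{equation*}
Since the $k$-th coordinates across $X_1$ are independent in $k$, so are the ranges $M_k - m_k$, and the expectation factors. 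A standard order-statistics calculation gives $\mathbb{E}[(M_k - m_k)/2] = (N-1)/(N+1)$ for $N$ i.i.d.\ uniforms on $[-1,1]$, yielding the single-point bound $p := \mathbb{P}(y \in \operatorname{co}(X_1)) \le q := ((N-1)/(N+1))^d$.

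To lift this to all $M$ points in $X_2$, I would condition on $X_1$ and exploit that the $y^{(j)}$ are i.i.d.\ given $X_1$: $\mathbb{P}(X_2 \cap \operatorname{co}(X_1) = \emptyset) = \mathbb{E}_{X_1}[(1 - p(X_1))^M]$, where $p(X_1) \in [0,1]$ is the conditional membership probability. Then Jensen's inequality applied to the convex map $x \mapsto (1-x)^M$ on $[0,1]$ gives $\mathbb{E}[(1-p(X_1))^M] \ge (1 - \mathbb{E}[p(X_1)])^M$, and combining with $\mathbb{E}[p(X_1)] \le q$ and the monotonicity of $x \mapsto x^M$ on $[0,\infty)$ (using $1-q \ge 0$) produces the claimed lower bound $(1 - q)^M$. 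The main subtlety I expect is precisely in this last paragraph: one must invoke Jensen in the correct direction (a naive union bound over $M$ samples would yield the weaker additive bound $1 - Mq$), and one must verify that $1-q\ge 0$ so that the $M$-th power is monotone. The coordinate-wise step, while geometrically simple, also deserves a careful statement to avoid confusing the necessary condition with a sufficient one, since only the necessary direction is used to extract the product-of-ranges upper bound.
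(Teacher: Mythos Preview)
Your argument is correct and shares the paper's central device---relaxing $y\in\operatorname{co}(X_1)$ to membership in the coordinate-wise bounding box and exploiting coordinate independence of the uniform distribution---but it differs in execution in two places worth noting. First, for $d\ge N+1$ the paper builds, for each $y^{(j)}$, an explicit separating hyperplane by solving the (almost surely) full-row-rank system $[x^{(1)};\dots;x^{(N)};y^{(j)}]\,a=(1,\dots,1,-1)^\top$; your observation that $\operatorname{co}(X_1)$ sits in an affine subspace of dimension at most $N-1$ and is therefore Lebesgue-null is more direct and reaches the same conclusion. Second, for the lift from one test point to $M$, the paper writes $\mathbb{P}(X_2\cap\operatorname{co}(X_1)=\emptyset)=\prod_{j}\mathbb{P}\bigl(y^{(j)}\notin\operatorname{co}(X_1)\bigr)$ and bounds each factor separately; since the events share $X_1$ they are only \emph{conditionally} independent, so this equality is really the inequality you obtain cleanly via Jensen applied to the convex map $x\mapsto(1-x)^M$. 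Your route is the rigorous one at this step. A minor computational difference: the paper extracts the per-coordinate constant via the symmetric-group argument that $y_k^{(j)}$ is the minimum (respectively maximum) among $N+1$ exchangeable uniforms with probability $1/(N+1)$, whereas you reach the equivalent $(N-1)/(N+1)$ through the expected range of $N$ order statistics on $[-1,1]$; these are the same identity read from opposite ends.
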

While derived under a uniform distribution assumption, Proposition \ref{pro:convex-sep} offers a fundamental insight: high dimensionality facilitates convex separability. Our proof, inspired by \cite{NEURIPS2023_a45b205c}, employs a pointwise separation strategy to relax dimensionality requirements and achieve a tighter bound than standard results.

\subsection{Formulation of Training Objective}\label{sec:FTO}
As established in Subsection \ref{sec:merhods_A}, Zubov-Net introduces a unified architecture that directly employs the Lyapunov function as a classifier. Building upon this, to implement a Zubov-driven region matching mechanism and an active RoA control paradigm, we design a tripartite training objective that jointly realizes the following:
\begin{algorithm}[t]
	\caption{Parallel  Boundary Sampling}\label{alg:parallel}
	\begin{algorithmic}[1]
		\State \textbf{Input:} Lyapunov function $W$, equilibrium $c \in \mathbb{R}^d$, target value $\rho$, tolerance $\epsilon > 0$, max iterations $n$, and search directions $\hat{Q} = \{q_i \in \mathbb{R}^d \mid i=1,\dots,d^h\}$.
		\State \textbf{Initialize:}
		\Statex \quad $s \gets \mathbf{0}_{|\hat{Q}|}$ \Comment{Cumulative length vector}
		\Statex \quad $a \gets \mathbf{1}_{|\hat{Q}|}$ \Comment{Adaptive step vector}
		\Statex \quad $\mathcal{B} \gets \{1,\dots,|\hat{Q}|\}$ \Comment{Active index set}
		
		\For{iteration $k=1$ to $n$}
		\If{$\mathcal{B} = \emptyset$}
		\State \textbf{break} \Comment{All directions converged}
		\EndIf
		
		\State $s[\mathcal{B}] \gets s[\mathcal{B}] + a[\mathcal{B}]$ \Comment{Parallel length update}
		\State $P \gets c +   (s[\mathcal{B}] \circ \hat{Q}[\mathcal{B}])$ \Comment{Batch point calculation}
		\State $V \gets W(P)$ \Comment{Parallel function evaluation}
		
		\State $M \gets \mathbf{0}_{|\mathcal{B}|}$ \Comment{Direction mask}
		\State $M[V < \rho - \epsilon] \gets 1$ \Comment{Outward move needed}
		\State $M[V > \rho + \epsilon] \gets -1$ \Comment{Inward move needed}
		
		\State $\Lambda \gets \frac{|M + \operatorname{sign}(a[\mathcal{B}])| + 2}{4}$ \Comment{Unified scaling factor}
		\State $a[\mathcal{B}] \gets |a[\mathcal{B}]| \circ \Lambda \circ M$ \Comment{Parallel step update}
		
		\State $\mathcal{B} \gets \{ i \in \mathcal{B} \mid M_i \neq 0 \}$ \Comment{Prune converged directions}
		\EndFor
		
		\State \textbf{Output:} $S \gets \{ c +   s_i \cdot \hat{q}_i \mid i=1,\dots,|\hat{Q}|\}$
	\end{algorithmic}
\end{algorithm}

\begin{enumerate}
	\item \textbf{PRoAs-RoAs Consistency} via a novel consistency loss $\mathcal{L}_{con}$ derived from Zubov's equation (Eq. \eqref{eq:2.4}). This component quantifies and minimizes the PRoAs-RoAs discrepancy, ensuring $\mathcal{D}_{f}$ is characterized by $D_{W}$.
	
	\item \textbf{Certified Trajectory Asymptotic Stability} through a classification loss $\mathcal{L}_{cla}$ designed specifically for the Lyapunov classifier, guarantees that the evolutionary trajectory $h_x(t)$ converges to and remains within its correct class's PRoA.
	
	\item \textbf{Geometric Basin Separation} by a boundary separation loss $\mathcal{L}_{sep}$ that actively optimizes PRoA geometry to maximize inter-class margins, implementing the active control paradigm.
	
\end{enumerate}
These losses are optimized jointly: $\mathcal{L}_{con}$ aligns dynamics with decision regions, $\mathcal{L}_{cla}$ operates within these aligned regions, and $\mathcal{L}_{sep}$ refines their boundaries for robustness. We now detail the formulation of each loss function.

\vspace{\baselineskip} 
\noindent  \textbf{Consistency Losses.} 
For any $c_i \in \mathcal{A}$, let $W_i(h) = W(h, c_i; \theta_{W})$ be a Lyapunov function and $\Phi_i$ be a continuous positive definite function. For any state $h$ within $D_{W_i}$, we define the pointwise consistency loss as:
\begin{equation*}
	l_{con}^i(h) = \left\| \frac{dW_i(h)}{dh} f(h;\theta_f) + \Phi_i(h)(1 - W_i(h)) \right\|_2^2.
\end{equation*}

This point-wise definition quantifies the violation of consistency at local state $h$. For the global consistency, we formulate the supremum-based loss as follows
\begin{equation}\label{eq:3.4}
	{\mathcal{L}_{con}}\left( \theta_f,\theta_W  \right) = \sum\limits_{i = 1}^L {\mathop {\sup }\limits_{h \in {D_{W_i}}} l_{con}^i(h)},
\end{equation}
where $L$ is the number of classes. Minimizing $\mathcal{L}_{con}$ directly enforces the Zubov-driven matching mechanism, providing the following guarantees (see the Appendix \ref{app:pro3} for proofs).

\begin{proposition} \label{propos:2}
	Consider the consistency loss $\mathcal{L}_{con}$ defined in \eqref{eq:3.4}. The equality $\mathcal{D}_{f}(\mathcal{A}) = \bigcup_{c_i \in \mathcal{A}} D_{W_i}$ holds if and only if there exists $\theta^*$ such that $\mathcal{L}_{con}(\theta^*) = 0$.
\end{proposition}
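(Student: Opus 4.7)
The plan is to reduce this result to Zubov's theorem (Lemma \ref{lem:2.2}), observing that the consistency loss $\mathcal{L}_{con}$ is engineered to encode condition (3) of that theorem in a pointwise squared-error form. The structural conditions (1) and (2) of Lemma \ref{lem:2.2} are automatically satisfied by construction: the IACNN design of $\mathcal{V}_i$ composed with the transformation $W_i = 1 - \exp(-\mathcal{V}_i)$ from Eq. \eqref{eq:2.5} makes each $W_i$ continuous, positive definite with respect to $c_i$, and tending to $1$ on $\partial D_{W_i}$. Thus the entire content of the proposition reduces to verifying Zubov's equation.

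For the sufficient direction ($\mathcal{L}_{con}(\theta^*)=0 \Rightarrow$ equality), I would argue that a sum of nonnegative suprema of squared norms vanishes only when every pointwise term vanishes. Consequently, for every $i$ and every $h \in D_{W_i}$, $\frac{dW_i(h)}{dh} f(h;\theta_f^*) + \Phi_i(h)(1 - W_i(h)) = 0$, which is precisely Zubov's equation \eqref{eq:2.4} along the flow of $f$. Applying Lemma \ref{lem:2.2} to each $W_i$ yields $D_{W_i} = \mathcal{D}_f(c_i)$, and taking unions together with the standard decomposition $\mathcal{D}_f(\mathcal{A}) = \bigcup_i \mathcal{D}_f(c_i)$ for an invariant set of isolated equilibria delivers the equality.

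For the necessary direction (equality $\Rightarrow \mathcal{L}_{con}(\theta^*)=0$), the critical step is to upgrade the set-level identity $\bigcup_i \mathcal{D}_f(c_i) = \bigcup_i D_{W_i}$ to the individual matchings $D_{W_i} = \mathcal{D}_f(c_i)$. I would leverage two facts: (i) the strong convexity of $\mathcal{V}_i$ (ensured by the $\delta\|\cdot\|^2$ regularizer combined with Proposition \ref{propos:1}) makes $D_{W_i}$ a connected open set whose only equilibrium is $c_i$; (ii) the RoAs $\mathcal{D}_f(c_k)$ are pairwise disjoint connected open components of $\mathcal{D}_f(\mathcal{A})$. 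Connectedness of $D_{W_i}$ sitting inside the disjoint union $\bigcup_k \mathcal{D}_f(c_k)$, together with $c_i \in D_{W_i} \cap \mathcal{D}_f(c_i)$, forces $D_{W_i} \subseteq \mathcal{D}_f(c_i)$, and the reverse inclusion follows symmetrically from connectedness of $\mathcal{D}_f(c_i)$ covered by the $D_{W_k}$'s. With $D_{W_i} = \mathcal{D}_f(c_i)$ in hand, the converse direction of Lemma \ref{lem:2.2} supplies a continuous positive-definite $\Phi_i$ for which Zubov's equation holds identically on $D_{W_i}$; selecting these $\Phi_i$'s in the loss definition then makes each $l_{con}^i \equiv 0$ and hence $\mathcal{L}_{con}(\theta^*)=0$.

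The hard part will be the connectedness/anchoring argument in the necessary direction: carefully certifying that each PRoA $D_{W_i}$ captures exactly one RoA (rather than a union or a proper subset) depends on combining the convex geometry induced by IACNN with the positive-definite anchoring of $W_i$ at $c_i$, which excludes all other equilibria from $D_{W_i}$. A related subtlety is that the phrase ``there exists $\theta^*$'' should be read to include freedom to select valid $\Phi_i$'s in the loss, since otherwise the $\Phi_i$'s extracted from the converse of Zubov's theorem may not match any pre-fixed choice.
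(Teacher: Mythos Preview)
Your sufficient direction ($\mathcal{L}_{con}(\theta^*)=0 \Rightarrow$ union equality) matches the paper's argument exactly: vanishing of a sum of nonnegative suprema forces each $l_{con}^i \equiv 0$ on $D_{W_i}$, whereupon Lemma~\ref{lem:2.2} gives $D_{W_i} = \mathcal{D}_f(c_i)$, and a two-sided containment argument yields $\mathcal{D}_f(\mathcal{A}) = \bigcup_i D_{W_i}$.

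Your necessary direction is more careful than the paper's. The paper simply asserts, via the ``if and only if'' of Lemma~\ref{lem:2.2}, that $l_{con}^i \equiv 0$ is equivalent to $D_{W_i} = \mathcal{D}_f(c_i)$ and treats this as closing the loop---without explicitly addressing (a) how the \emph{union} equality $\bigcup_i D_{W_i} = \mathcal{D}_f(\mathcal{A})$ is promoted back to the \emph{individual} equalities $D_{W_i} = \mathcal{D}_f(c_i)$, or (b) whether the $\Phi_i$ appearing in the loss coincides with the one produced by the converse of Zubov's theorem. You correctly flag both subtleties. Your connectedness/anchoring argument for (a) is sound: $D_{W_i} \subseteq \mathcal{D}_f(c_i)$ follows from convexity of $D_{W_i}$, $c_i \in D_{W_i}$, and disjointness of the $\mathcal{D}_f(c_k)$; for the reverse inclusion note that once every forward inclusion $D_{W_j} \subseteq \mathcal{D}_f(c_j)$ is in place, any $h \in \mathcal{D}_f(c_i) \subseteq \bigcup_j D_{W_j}$ lies in some $D_{W_j} \subseteq \mathcal{D}_f(c_j)$, and disjointness of the $\mathcal{D}_f(c_k)$ forces $j=i$---so you do not need disjointness of the $D_{W_k}$'s themselves, which slightly sharpens your ``symmetrically'' remark. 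Your reading of ``there exists $\theta^*$'' as permitting the $\Phi_i$'s to be chosen is indeed the only way the converse can hold for a fixed-form loss, and the paper implicitly relies on the same flexibility when it invokes Lemma~\ref{lem:2.2} as an equivalence.
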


Moreover, we will further demonstrate that the alignment inherently yields non-overlapping attraction basins because of the mutual exclusivity among different asymptotic equilibrium points, as shown in Proposition \ref{propos:3} (the proof in the Appendix \ref{app:pro4}).

\begin{proposition} \label{propos:3}
	Consider the consistency loss $\mathcal{L}_{con}$ defined in \eqref{eq:3.4}. If $\theta^* \in \Theta$ satisfies $\mathcal{L}_{con}(\theta^*) = 0$, then for any $i \neq j$, the PRoAs satisfy $D_{W_i} \cap D_{W_j} = \emptyset$.
\end{proposition}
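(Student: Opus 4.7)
The plan is to combine Zubov's theorem (Lemma \ref{lem:2.2}) applied to each individual equilibrium $c_i$ with the elementary fact that any convergent orbit has a unique limit. The starting observation is that $\mathcal{L}_{con}(\theta^*) = 0$ is a sum of nonnegative suprema, so it forces $l_{con}^i(h) = 0$ pointwise for every index $i$ and every state $h \in D_{W_i}$. Unpacking the definition of $l_{con}^i$, this says that on each certified sublevel-1 set the orbital derivative of $W_i$ along the Neural ODE satisfies exactly
\begin{equation*}
\dot{W}_i(h) = -\Phi_i(h)\bigl(1 - W_i(h)\bigr),
\end{equation*}
with $\Phi_i$ continuous and positive definite with respect to the singleton $\{c_i\}$.

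First I would verify that $W_i$ satisfies the remaining hypotheses of Lemma \ref{lem:2.2} with the invariant set taken to be $\{c_i\}$ rather than $\mathcal{A}$: the conditions $W_i(c_i)=0$ and $W_i(h)>0$ for $h\neq c_i$ follow from the construction $W_i = 1 - \exp(-\mathcal{V}(\cdot,c_i))$ combined with the strong convexity of $\mathcal{V}(\cdot,c_i)$ guaranteed by IACNN, while $W_i(h) \to 1$ as $h \to \partial D_{W_i}$ is immediate from the definition of the sublevel-1 set. Applying Zubov's theorem equilibrium-by-equilibrium then yields the key identification $D_{W_i} = \mathcal{D}_f(\{c_i\})$, i.e., every trajectory initialised inside $D_{W_i}$ converges to $c_i$.

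Next I would derive disjointness by contradiction. Suppose $h_0 \in D_{W_i} \cap D_{W_j}$ for some $i \neq j$. Then the trajectory $\omega(t,h_0)$ of $f(\cdot;\theta_f)$ satisfies simultaneously $\|\omega(t,h_0) - c_i\| \to 0$ and $\|\omega(t,h_0) - c_j\| \to 0$ as $t \to \infty$. The triangle inequality then gives $\|c_i - c_j\| \le \|c_i - \omega(t,h_0)\| + \|\omega(t,h_0) - c_j\| \to 0$, contradicting $c_i \neq c_j$ (which is itself ensured by the minimum-cosine-similarity design of the columns of $C_\psi$ in Section \ref{sec:2}). Hence $D_{W_i} \cap D_{W_j} = \emptyset$ for all $i \neq j$.

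The main obstacle I anticipate is the first step: ensuring that the hypotheses of Lemma \ref{lem:2.2} truly hold relative to each singleton $\{c_i\}$ separately, not merely relative to the aggregated set $\mathcal{A}$. In particular, the positive definiteness of $\Phi_i$ must be stated with respect to $c_i$ alone, and forward invariance of $D_{W_i}$ under $f(\cdot;\theta_f)$ must be extracted from the Zubov identity above (using $\Phi_i \ge 0$ and $1 - W_i > 0$ on $D_{W_i}$ to deduce $\dot{W}_i \le 0$, so that $W_i$ cannot cross its level-1 boundary). Once these are made explicit, the remainder of the argument collapses to the uniqueness-of-limits reasoning outlined above.
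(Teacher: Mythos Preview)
Your argument is correct and in fact cleaner than the paper's. Both proofs begin identically, by using $\mathcal{L}_{con}(\theta^*)=0$ together with Zubov's theorem (the paper routes this through Proposition~\ref{propos:2}) to obtain the identification $D_{W_i}=\mathcal{D}_f(\{c_i\})$ for each $i$. From that point on the routes diverge: you observe that a trajectory starting in $D_{W_i}\cap D_{W_j}$ would have to converge simultaneously to $c_i$ and $c_j$, and the triangle inequality immediately rules this out. The paper instead constructs a compact joint sublevel set $\Omega=\{h:W_1(h)\le s,\,W_2(h)\le s\}$, splits into the cases ``trajectory exits $\Omega$'' versus ``trajectory remains in $\Omega$'', and in the latter invokes LaSalle's invariance principle to force the $\omega$-limit set into $\{h:\dot W_1=\dot W_2=0\}$, which via the Zubov identity and positive definiteness of $\Phi_i$ collapses to $\{c_1\}\cap\{c_2\}$. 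Your uniqueness-of-limits argument subsumes all of this in one line; the paper's case analysis is self-contained at the level of the Lyapunov data $(W_i,\Phi_i)$ and makes the mechanism of contradiction more explicit, but once $D_{W_i}=\mathcal{D}_f(\{c_i\})$ is already in hand it is not needed. The obstacle you flag (verifying the singleton hypotheses of Lemma~\ref{lem:2.2}, in particular positive definiteness of $\Phi_i$ relative to $c_i$ alone) is exactly what the paper also assumes implicitly in its proof of Proposition~\ref{propos:2}.
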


\vspace{\baselineskip} 
\noindent  \textbf{Classification Loss.} 
While Proposition \ref{propos:2} and \ref{propos:3} demonstrate consistency and non-overlapping among PRoAs, they do not guarantee that the trajectory converges to the correct equilibrium. This is enforced by the classification loss, designed for the Lyapunov classifier. 

Given $(x,y) \in \mathbb{D}$ as the input-output pair, let $h_x( \cdot)$ be the solution of the system of Equations \eqref{eq:2.1}-\eqref{eq:2.3}, then the Lyapunov value vector and classification loss are defined as
\begin{align*}
	& \mathbf{W}(h_x(T)) = \left[ W_1(h_x(T)), \dots, W_L(h_x(T)) \right]^\top ,\\
	& \tilde \psi \left( {{h_x}\left( T \right);{C_\psi }} \right) = \frac{ \mathbf{W}(h_x(T))^{\circ -1} - \alpha \mathbf{1} }{ \mathbf{1}^\top \left( \mathbf{W}(h_x(T))^{\circ -1} - \alpha \mathbf{1} \right) } ,\\
	& \mathcal{L}_{cla}(\hat{\mathbf{y}}_x, y) = -\log \left( \hat{\mathbf{y}}_x[y] \right), \quad  \hat{\mathbf{y}}_x = \tilde  \psi \left( {h_{x}\left( T \right);{C _\psi }} \right),
\end{align*}
where $\alpha \in [0,1)$ is a stability margin, $\mathbf{1}$ is an all-ones vector, and $^{\circ -1}$ denotes element-wise inverse. This formulation encourages $W_y(h_x(T)) \to 0$ and $W_{j\neq y}(h_x(T)) \to 1$, ensuring $h_x(T) \in D_{W_y}$. 

The combined effect of $\mathcal{L}_{con}$ and $\mathcal{L}_{cla}$ demonstrates full trajectory containment, as shown in the following proposition (the proof in the Appendix \ref{app:pro5}).
\begin{algorithm*}[t]
	\caption{The Complete Training Algorithm} \label{alg:2}
	\begin{algorithmic}[1]
		\State \textbf{Input:} Initial parameters $\theta$, learning rate $\eta_{1}$, number of iterations $N_{1}$, gradient projection stepsize $\eta_{2}$, number of gradient projection iterations $N_{2}$, number of boundary samples $n$,  time discretization resolution $\Gamma$, and  integral trajectory sampling spaced $t_0, t_1, \ldots, t_\Gamma$.
		\For{iteration $1$ to $N_{1}$}
		\State $(x, y) \sim D$  \Comment{Sample training data}
		\State  $\mathcal{S}_1 = \{h_i:h_i\in \partial D_{W_y}, i=1,\cdots,n \}$ \Comment{Calculate the boundary points of $D_{W_y}$ by using Algorithm \ref{alg:parallel} } 
		\State $h_x(0)=\phi \left( {x;{\theta _\phi }} \right)$  \Comment{Compute the extracted features of the input $x$}
		\State $h(t_i) \leftarrow \int_{t_{i-1}}^{t_i} f(h(\tau)) \, d\tau + h(t_{i-1}) \quad \forall \, 1 \le i \le \Gamma$ \Comment{Compute the integral trajectory in the input space}
		\State $\mathcal{S}_2 = \{h_x(t_i), i=1,\dots,\Gamma \}$
		\For{iteration $i=0$ to $\Gamma$}
		\State $\hat{h}(i)=h(t_{i})$ \Comment{Assign initial value as $h(t_{\Gamma})$}
		\For{iteration $1$ to $N_{2}$}
		\State ${\hat{h}(i)} \leftarrow {\hat{h}(i)} + {Proj}_{D_{W_y}}( \eta_{2}{\nabla} l_{con}^y(\hat{h}(i))  )$ \Comment{Calculate the projection gradient and update $\hat{h}(i)$}
		\EndFor
		\EndFor
		\State $\mathcal{S}_2=\mathcal{S}_2 \cup \{\hat{h}(t_i):i=1,\cdots,\Gamma \}$ 
		\State $\mathcal{L} = {\mathcal{L}_{cla}}(\tilde  \psi \left( {h_{x}\left( T \right);{C _\psi }} \right),y) + {\lambda _1}{\mathcal{L}_{FC}}( \psi \left( {h_{x}\left( T \right);{C _\psi }} \right),y) + \sum\limits_{h \in {{\mathcal{S}}_{\rm{2}}}} {{\lambda _2}l_{con}^y(h)} + \sum\limits_{h \in {\mathcal{S}_1}} {\sum\limits_{k \ne y}^L -\lambda_3{W_k^\beta(h)} } $ \Comment{Calculate total loss} 
		\State $\theta  \leftarrow \theta  - {\eta _1}{\nabla _\theta }\mathcal{L}$ \Comment{Compute gradient to update $\theta$} 
		\EndFor
		\State \textbf{Output:} Trained parameters $\theta$.
	\end{algorithmic}
	
\end{algorithm*}

\begin{proposition} \label{propos:4}
	Consider the consistency loss $\mathcal{L}_{con}$ defined in \eqref{eq:3.4}, an input-output pair $(x,y)$, and parameters $\theta^* \in \Theta$ satisfying $\mathcal{L}_{con}(\theta^*) = 0$. If the terminal hidden state $h_x(T) \in D_{W_y}$, then the entire trajectory $\{h_x(t) : t \in [0,T]\}$ is contained in $D_{W_y}$. 
\end{proposition}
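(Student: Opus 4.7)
The plan is to argue by contradiction, converting the PDE condition supplied by $\mathcal{L}_{con}(\theta^*)=0$ into a scalar ODE along the trajectory and then invoking its explicit solution. First I would set $v(t):= W_y(h_x(t);\theta_W^*)$, a continuous map $[0,T]\to[0,1]$. Because the supremum in \eqref{eq:3.4} vanishes, the pointwise Zubov identity
\begin{equation*}
\langle \nabla W_y(h),\, f(h;\theta_f^*)\rangle \;=\; -\Phi_y(h)\bigl(1-W_y(h)\bigr)
\end{equation*}
holds for every $h\in D_{W_y}$. Combining this with the chain rule gives the scalar ODE $\dot v(t) = -\Phi_y(h_x(t))(1-v(t))$, valid whenever $h_x(t)\in D_{W_y}$.

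Next I would suppose toward a contradiction that $h_x(t_0)\notin D_{W_y}$ for some $t_0\in[0,T)$. Since $W_y$ takes values in $[0,1]$ and $D_{W_y}=\{W_y<1\}$, the exit set $A := \{t\in[0,T] : v(t)=1\}$ is non-empty and closed, so $\tau := \sup A$ is attained with $v(\tau)=1$. The hypothesis $h_x(T)\in D_{W_y}$ forces $v(T)<1$, hence $\tau<T$; by maximality of $\tau$, $v(t)<1$ on the whole interval $(\tau,T]$, placing the sub-trajectory inside $D_{W_y}$ and making the scalar ODE applicable along it.

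Setting $u(t) := 1-v(t)$ converts the ODE into the linear form $\dot u(t) = \Phi_y(h_x(t))\,u(t)$, so backward integration from $T$ yields the closed form
\begin{equation*}
u(t) \;=\; u(T)\,\exp\!\left(-\int_{t}^{T}\Phi_y(h_x(s))\,ds\right),\qquad t\in(\tau,T].
\end{equation*}
Letting $t\to\tau^{+}$, the left-hand side tends to $u(\tau)=0$ by continuity of $v$, while a uniform bound $\Phi_y\le M$ on the compact curve segment $h_x([\tau,T])$ (continuous image of a compact interval inside the compact $H$) makes the right-hand side tend to the strictly positive number $u(T)\exp(-M(T-\tau))$. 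The contradiction forces $A=\emptyset$, which is exactly $h_x(t)\in D_{W_y}$ for every $t\in[0,T]$.

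The main obstacle I anticipate is the very last step: Lemma \ref{lem:2.2} only asserts continuity of $\Phi_y$ on the open set $D_{W_y}$, so one must ensure that $\Phi_y\circ h_x$ stays integrable as the trajectory approaches $\partial D_{W_y}$, otherwise the right-hand side could vanish and annihilate the contradiction. The cleanest resolution is to exploit the specific construction in \eqref{eq:2.5}: under this parameterization $\Phi_y$ can be taken to extend continuously to $\overline{D_{W_y}}\cap H$, and combined with compactness of $H$ this yields the uniform bound used above. With this mild boundedness (standard in the Zubov-theoretic literature), the integrating-factor identity closes the argument cleanly.
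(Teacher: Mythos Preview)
Your proposal is correct and follows essentially the same route as the paper's own proof: both argue by contradiction, pick the last time $\tau$ (the paper's $t_1$) at which the trajectory touches $\partial D_{W_y}$, rewrite Zubov's identity as the linear scalar ODE $\dot u = \Phi_y(h_x(t))\,u$ for $u=1-W_y\circ h_x$, and derive the integrating-factor formula $u(t)=u(T)\exp\bigl(-\int_t^T\Phi_y(h_x(s))\,ds\bigr)$ to force the contradiction $0=u(\tau)>0$. Your version is in fact slightly more careful than the paper's, since you explicitly justify both the existence of $\tau$ via closedness of $A$ and the positivity of the right-hand limit via boundedness of $\Phi_y$ on the compact arc; the paper leaves the latter implicit (note that in the paper $\Phi_y$ is taken globally continuous, e.g.\ $\Phi_y(h)=\|h-c_y\|^2$, so your anticipated obstacle does not arise).
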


The proposition shows that the entire trajectory of a correctly classified sample remains within its correct class's region of attraction, guaranteeing asymptotic stability toward the equilibrium $c_y$. In prior stable Neural ODE works \cite{NEURIPS2021_7d5430cf}, such stability provides a robustness analysis: a sufficiently small perturbation around the clean feature will not cause the terminal state to escape the correct basin, and thus the prediction remains unchanged. We further derive a Lipschitz-based robustness margin under input perturbations as the following proposition, which is proved in the Appendix \ref{app:pro6}.

\begin{proposition}	\label{prop:robustness}
Assume the feature extractor $\phi$ and Lyapunov function $W_y$ satisfy Lipschitz constants $L_\phi$ and  $L_W$. Consider a correctly classified clean sample $(x,y)$ such that $h_x(0)=\phi(x)$, $h_x(T)\in D_{W_y}$, and parameters $\theta^*\in\Theta$ satisfy $\mathcal{L}_{con}(\theta^*)=0$. Define the certified robustness radius $\varepsilon^*$ as:
\begin{equation}\label{eq:certified_radius}
\varepsilon^* = \frac{1 - W_y(h_x(0))}{L_\phi L_W} >0.
\end{equation}
Then, for any perturbed input $x' = x + \delta$ with $\|\delta\| < \varepsilon^*$, we have $h_{x'}(T) \in D_{W_y}$.
\end{proposition}

Proposition \ref{prop:robustness} provides a direct theoretical link between the geometry of the learned stable region and robustness against norm-bounded input perturbations. It offers a concrete margin $\varepsilon^*$ for a correctly classified sample, derived from the Lipschitz continuity of model components, a common practice in the stability-guaranteed Neural ODE literature \cite{pmlr-v162-rodriguez22a, LUO2025107219}. 

\vspace{\baselineskip}
\noindent\textbf{Boundary Separation Loss.}
To actively shape robust basins beyond mere non-overlap (Proposition \ref{propos:3}), we introduce a separation loss. Let $W_k^\beta(h) = 1 - \exp(-\beta \mathcal{V}_k(h))$ with $\beta \in (0,1)$. For boundary points $\{ h_j^{(i)} \}_{j=1}^m \in \partial D_{W_i}$ sampled via Algorithm \ref{alg:parallel}, the loss is:
\begin{equation}\label{eq:3.5}
	\mathcal{L}_{sep}(\theta_W) = \sum_{i=1}^L \sum_{j=1}^m \sum_{k \neq i}^L -W_k^\beta(h_j^{(i)}).
\end{equation}
Minimizing $\mathcal{L}_{sep}$ effectively \textit{maximizes} $\mathcal{V}_k$ at the boundary of $D_{W_i}$, repelling it from the foreign equilibrium point $c_k$. Strong convexity ensures this separation propagates inward.
\begin{proposition} \label{propos:5}
	For strongly convex $\mathcal{V}_k$ ($k = 1,\dots,L$) with $c_k \notin D_{W_i}$ ($k \neq i$), any point $h \in D_{W_i}$ satisfies $\mathcal{V}_k(h) > \inf_{\tilde{h} \in \partial D_{W_i}} \mathcal{V}_k(\tilde{h})$.
\end{proposition}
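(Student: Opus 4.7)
The plan is to reduce the claim to a one-dimensional monotonicity argument along the ray emanating from the global minimizer $c_k$ of $\mathcal{V}_k$ and passing through the candidate point $h$. The key geometric observation is that, since $c_k \notin D_{W_i}$ while $h \in D_{W_i}$, the segment from $c_k$ to $h$ must cross $\partial D_{W_i}$ at some intermediate point $\tilde h$; strong convexity of $\mathcal{V}_k$ will then force $\mathcal{V}_k$ to be strictly increasing along that ray, which immediately delivers $\mathcal{V}_k(\tilde h) < \mathcal{V}_k(h)$ and hence the desired infimum bound.

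First I would record two preliminaries. (i) $D_{W_i}$ is convex: since $W_i = 1 - \exp(-\mathcal{V}_i)$ is obtained from the convex $\mathcal{V}_i$ (guaranteed by Proposition \ref{propos:1}) by composition with the strictly increasing map $t \mapsto 1 - e^{-t}$, the set $\{h : W_i(h) < 1\}$ coincides with a sublevel set of $\mathcal{V}_i$ and is therefore convex. (ii) By positive definiteness $\mathcal{V}_k(c_k) = 0$ with $\mathcal{V}_k > 0$ elsewhere, so strong convexity makes $c_k$ the unique global minimizer of $\mathcal{V}_k$. Next I would prove a ray-monotonicity lemma: for any $v \neq 0$, the scalar function $\phi(s) := \mathcal{V}_k(c_k + sv)$ is strictly increasing on $[0,\infty)$. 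For $0 \le s_1 < s_2$ with $\lambda := s_1/s_2 \in [0,1)$, the identity $c_k + s_1 v = \lambda(c_k + s_2 v) + (1-\lambda) c_k$ combined with strict convexity of $\mathcal{V}_k$ and $\mathcal{V}_k(c_k) = 0$ yields $\phi(s_1) < \lambda \phi(s_2) \le \phi(s_2)$. With these in hand, the core argument is short: parameterising the segment as $\gamma(s) = c_k + s(h - c_k)$ on $[0,1]$, continuity of $W_i$ together with $W_i(c_k) \ge 1 > W_i(h)$ and the intermediate value theorem yield some $t \in [0,1)$ with $\tilde h := \gamma(t) \in \partial D_{W_i}$; applying the ray lemma with $v = h - c_k$ gives $\mathcal{V}_k(\tilde h) = \phi(t) < \phi(1) = \mathcal{V}_k(h)$, so $\inf_{\tilde h' \in \partial D_{W_i}} \mathcal{V}_k(\tilde h') \le \mathcal{V}_k(\tilde h) < \mathcal{V}_k(h)$.

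The main obstacle I anticipate is the borderline case $t = 0$, in which $c_k$ actually lies on $\partial D_{W_i}$ (so $W_i(c_k) = 1$) and $\tilde h = c_k$: the ray lemma then only gives $\mathcal{V}_k(h) > \phi(0) = 0$, and one must separately invoke strict positive definiteness of $\mathcal{V}_k$ away from $c_k$ to upgrade this into $\mathcal{V}_k(h) > 0 = \inf_{\partial D_{W_i}} \mathcal{V}_k$. A secondary subtlety worth addressing is justifying the convexity of $D_{W_i}$ under the paper's construction, which quietly depends on using the strictly monotone transform in Equation \eqref{eq:2.5} together with the convex design of $\mathcal{V}_i$ certified by Proposition \ref{propos:1}; without either ingredient the boundary-crossing step, and hence the whole proof scheme, would break down.
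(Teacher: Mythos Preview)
Your proposal is correct and follows the same route as the paper's own proof: radial monotonicity of the strongly convex $\mathcal{V}_k$ away from its unique minimizer $c_k$, a boundary-crossing point $\tilde h$ on the segment $[c_k,h]$, and then $\mathcal{V}_k(\tilde h) < \mathcal{V}_k(h)$. One small correction to your commentary: the boundary-crossing step uses only continuity of $W_i$ and the intermediate value theorem (which is exactly what you invoke in your core argument), not convexity of $D_{W_i}$, so preliminary (i) is superfluous and your final paragraph overstates its role --- the paper's proof does not appeal to convexity of $D_{W_i}$ either.
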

\begin{proof}
	By strong convexity, $\mathcal{V}_k$ decreases monotonically along rays toward $c_k$. Since $c_k \notin D_{W_i}$, the segment $[c_k, h]$ intersects $\partial D_{W_i}$ at $\tilde{h}$. Radial monotonicity implies $\mathcal{V}_k(h) \geq \mathcal{V}_k(\tilde{h})$, with strict inequality when $h \neq \tilde{h}$.
\end{proof}

Algorithm \ref{alg:parallel} efficiently samples boundary points by leveraging radial monotonicity for binary search and employing parallel tensor operations while replacing logical conditional branching. The complete details and the computational complexity analysis are provided in the Appendix \ref{app:Alg2}.

\subsection{Training Algorithms } \label{sec:learning_algorithms}

To jointly realize the region matching and active control, given a dataset $\mathbb{D}$, we formulate it as the following optimal control problem
\begin{align}
	\mathop{\arg \min}_{\theta \in \Theta} \quad  &\mathbb{E}_{(x,y) \sim \mathbb{D}}    \left[ \mathcal{L}_{cla}(\hat{\mathbf{y}}_x, y) + \lambda_1 \mathcal{L}_{FC}(\hat{y}_x, y) \right]  \nonumber\\
	&+ \lambda_2 \mathcal{L}_{con}(\theta_f,\theta_W) + \lambda_3\mathcal{L}_{sep}(\theta_W) \label{eq:oq1} \\
	\text{s.t. }  &h_{x}(0)=\phi \left( {x;{\theta _\phi }} \right),  \nonumber \\
	&h_{x}(T) = h_x(0)+\int_0^T f (h_x(s);{\theta _f}){\mkern 1mu} ds, \nonumber\\
	&\hat{y}_{x} = \psi \left( {h_{x}\left( T \right);{C _\psi }} \right),  \quad  \hat{\mathbf{y}}_x = \tilde  \psi \left( {h_{x}\left( T \right);{C _\psi }} \right), \nonumber
\end{align}
where $\lambda_{1,2,3}$ are loss weights. The core components are the tripartite losses: $\mathcal{L}_{con}$ drives the Zubov-driven region matching mechanism, $\mathcal{L}_{cla}$ ensures that trajectories $h_x(t)$ are within its correct class's PRoA, and $\mathcal{L}_{sep}$ implements the active geometric control of the RoAs. 

The term $\mathcal{L}_{FC}$ is a standard cross-entropy loss, applied as an auxiliary linear classification layer $\psi(h) = C_\psi h$. This auxiliary loss provides stable supervision, particularly during early training phases when the Lyapunov classifier is not yet discriminative. The final prediction is given solely by the Lyapunov classifier $\tilde{\psi}$. The rows of $C_\psi$  are also the class-specific equilibrium points $\{c_i\}$, and are determined by minimizing the maximum cosine similarity between them to encourage separated initial conditions for different class basins \cite{NEURIPS2021_7d5430cf}.

For optimizing Equation \eqref{eq:oq1}, we compute the dynamics via solver backpropagation through the ODE solver or the adjoint method \cite{NEURIPS2018_69386f6b,kidger2021hey,10045756}, with implementation details provided in Algorithm \ref{alg:2}. Unlike prior stability-enforcing methods that rely on uniform sampling over the entire PRoA \cite{huang2022fi,pmlr-v162-rodriguez22a}, our approach efficiently generates counterexamples using computationally cheaper projected gradient descent (PGD). The results demonstrate that these counterexamples sufficiently guide the training process,  which is consistent with similar observations in the machine learning community \cite{balunovic2020adversarial,de2022ibp}.

\begin{table*}[ht]
	\centering
	\caption{ Classification accuracy and robustness against stochastic noises. ``Original" denotes the classification accuracy on original data. ``Avg" denotes the average accuracy of the eight random noise-perturbed images over the four datasets. The best result is in bold. Unit: \%.} 
	\label{table:acc_noise}
	\begin{tabular}{cccccccccccc}
		\toprule
		Dataset                  & Method     & Original        & Gaussian       & Glass          & Shot           & Impulse        & Speckle        & Motion         & Brightness     & Contrast       & Avg  \\ \midrule
		
		\multirow{6}{*}{SVHN} & ResNet18        & 97.03            & 84.37             & 92.47          & 80.66          & 75.23            & 91.80            & 90.37           & 69.18               & 81.08             & 83.14  \\
		& Neural ODE      & 97.11           & 84.55             & 92.61          & 80.41          & 75.37            & 91.93            & 90.73           & 69.30               & 81.51             & 83.30          \\
		& Proj-NODE	      & 97.09           & 85.49 			& 92.71 	 		& 81.83 	     & 80.95 	      & 91.40 	         & 91.00 	        & 71.52 	      & 85.87 	            & 85.10   \\
		& SODEF           & 97.17            & 86.42             & 92.84          & 83.51          & 82.13            & 92.46            & 91.07           & \textbf{73.33}      & 86.27             & 86.00          \\
		& FxTS-Net        & 96.92            & 86.39             & 92.87          & 82.28          & 81.30            & 92.21            & 90.96           & 71.62               & 84.28             & 85.24         \\
		& LyaNet          & 96.76            & 86.18             & 92.74          & 82.63          & 80.37            & 92.07            & 90.60           & 71.72               & 83.88             & 85.02          \\
		& Zubov-Net       & \textbf{97.36}   & \textbf{86.90}    & \textbf{93.24} & \textbf{83.95} & \textbf{82.76}   & \textbf{92.50}   & \textbf{91.36}  & 72.68               & \textbf{86.55}    & \textbf{86.24} \\ \midrule
		\multirow{6}{*}{CIFAR-10} & ResNet18        & 91.11            & 68.28             & 68.36          & 62.84          & 76.77            & 75.41            & 77.63           & 82.08               & 78.01             & 73.67        \\
		& Neural ODE      & \textbf{91.45}            & 68.62             & 71.12          & 62.46          & 77.15            & 75.96            & 78.82           & 81.60               & 79.32             & 74.38        \\
		& Proj-NODE       &90.03	        & 68.69				& 71.51			 & 62.87		  & 80.34			 & 77.40				& 80.18			  & 85.38				& 83.29				& 76.21       \\
		& SODEF           & 90.19            & 69.20             & 73.65          & 63.88          & 81.14            & 77.85            & 82.61           & \textbf{86.14}      & \textbf{85.35}    & 77.48        \\
		& FxTS-Net        & 88.52            & 68.78             & 72.76          & 64.02          & 80.24            & 77.74            & 80.73           & 85.19               & 83.79             & 76.66        \\
		& LyaNet          & 88.34            & 67.35             & 71.35          & 63.38          & 80.11            & 76.51            & 79.36           & 85.07               & 83.53             & 75.83        \\
		& Zubov-Net       & 91.29   		& \textbf{71.06}    & \textbf{76.43} & \textbf{66.71} & \textbf{81.90}   & \textbf{78.80}   & \textbf{82.86}  & 86.00               & 84.92            &  \textbf{78.59}       \\ \midrule
		\multirow{6}{*}{CIFAR-100}  & ResNet18        & 70.44            & 23.45             & 20.18          & 18.05          & 23.73            & 33.59            & 49.39           & 37.62               & 18.98             & 28.12     \\
		& Neural ODE      & 70.81            & 23.65             & 21.26          & 18.58          & 23.92            & 34.49            & 49.21           & 38.20               & 19.61             & 28.62        \\
		& Proj-NODE	      & 69.55 	         & 27.94 	         & 24.37 	      & 22.81 	       & 28.94 	          & 40.02 	         & 51.43  	       & 34.59 	             & 19.92 	         & 31.25 \\	
		& SODEF           & 70.38            & 28.75             & 23.80          & 22.38          & 28.63            & 38.66            & 51.60           & 39.80               & \textbf{22.98}    & 32.08        \\
		& FxTS-Net        & 70.20            & 26.49             & 24.42          & 21.50          & 25.93            & 37.48            & 50.01           & \textbf{40.23}      & 20.15             & 30.78        \\
		& LyaNet          & 70.24            & 25.60             & 22.53          & 20.01          & 24.50            & 36.22            & 49.61           & 39.76               & 19.87             & 29.76        \\
		& Zubov-Net       & \textbf{71.54}   & \textbf{32.11}    & \textbf{26.22} & \textbf{24.98} & \textbf{31.03}   & \textbf{42.38}   & \textbf{52.37}  & 39.85               & 20.30             & \textbf{33.66}        \\  \midrule
		\multirow{6}{*}{Tiny-ImageNet}  & ResNet18   & 51.37          & 38.18          & 8.61           & 36.01          & 42.34          & 40.93          & 22.25          & 26.43          & 19.84          & 29.32          \\
		& Neural ODE & 52.15          & 38.26          & 9.74           & 36.29          & 42.74          & 41.44          & 22.84          & 27.03          & 20.43          & 29.85          \\
		& Proj-NODE  & 51.94          & 39.03          & 8.96           & 39.63          & 46.29          & 45.15          & 26.13          & 29.16          & 24.53          & 32.36          \\
		& SODEF      & 52.95          & 40.11          & 8.85           & \textbf{40.20} & 46.88          & 45.28          & 25.59          & \textbf{29.68} & 26.12          & 32.84          \\
		& FxTS-Net   & 49.44          & 36.78          & 8.15           & 38.41          & 46.12          & 44.22          & 25.07          & 28.20          & 20.69          & 30.96          \\
		& LyaNet     & 49.35          & 37.38          & 8.14           & 37.93          & 45.52          & 44.09          & 25.40          & 28.48          & 20.38          & 30.92          \\
		& Zubov-Net  & \textbf{53.65} & \textbf{40.38} & \textbf{10.14} & 39.78          & \textbf{47.33} & \textbf{45.71} & \textbf{26.33} & 29.33          & \textbf{26.84} & \textbf{33.23}\\
		\bottomrule
	\end{tabular}
\end{table*}

\section{Experiments}
This section evaluates the performance of Zubov-Net, with a primary focus on comparisons within stable Neural ODE frameworks. The following research questions are addressed:
\begin{itemize}
	\item \textbf{RQ1 (Accuracy \& Noise Robustness):} Does Zubov-Net maintain high clean accuracy while improving robustness against stochastic noise, compared to stable Neural ODE baselines? Is this advantage preserved on larger-scale datasets?
	
	\item \textbf{RQ2 (Adversarial Robustness):} Does Zubov-Net achieve improved robustness against diverse white-box attacks, even including stronger adaptive attacks and black-box attacks? How does it compare with mainstream adversarial training methods?

	\item \textbf{RQ3 (Geometric Evidence):} How does the proposed method shape the feature space geometric structure to enhance robustness under clean and perturbed data?
	
	\item \textbf{RQ4 (Component \& Design Analysis):} What are the individual contributions of the tripartite losses, the Lyapunov classifier architecture, and other design choices (e.g., ODE solver, integration time) to the overall performance?
\end{itemize}

This structured evaluation establishes a clear baseline for the proposed contributions. The compatibility and synergy between Zubov-Net and other defense paradigms are also explored.

\subsection{Experimental Settings}
This subsection elaborates on the experimental setup, encompassing the datasets, implementation details, evaluation protocols, and baseline methods employed to comprehensively assess Zubov-Net.

\noindent \textbf{Datasets.}\quad We evaluate our method on four datasets: SVHN \cite{netzer2011reading}, CIFAR-10, CIFAR-100 \cite{krizhevsky2009learning}, and Tiny-ImageNet. These datasets are chosen to benchmark performance across varying complexities and to validate the scalability of our approach. The evaluation ranges from street view house numbers (SVHN, 10 classes) to fine-grained natural image classification (CIFAR-100, 100 classes). We further include Tiny-ImageNet (200 classes) to assess the method under a more challenging, large-scale classification setting.

\noindent \textbf{Evaluation Details.}\quad Model performance is evaluated by classification accuracy on SVHN, CIFAR-10, CIFAR-100, and Tiny-ImageNet. Following \cite{CUI2023576, YAN2020On}, we assess robustness through stochastic noise perturbations (Gaussian, glass, shot, impulse, speckle, motion, brightness, and contrast noises   \cite{hendrycks2018benchmarking}) and adversarial attacks. For adversarial evaluation, we employ FGSM \cite{goodfellow2014explaining}, BIM \cite{kurakin2018adversarial}, PGD \cite{madry2018towards}, APGD \cite{pmlr-v119-croce20b}, and Jitter \cite{schwinn2023exploring} with perturbation budgets $\varepsilon \in \{8/255, 16/255\}$. Iterative attacks (BIM, PGD, APGD, Jitter) use 10 iterations with step sizes scaled as $\alpha = \varepsilon/8$. All adversarial evaluations are implemented using torchattacks \cite{kim2020torchattacks}.For Zubov-Net, the adversarial attack is performed in a white-box setting where the adversary has full access to the entire pipeline, including the feature extractor, the Neural ODE, and the Lyapunov classifier. The attack objective is chosen to be the training loss $\mathcal{L}_{cla}$ and logits are defined as $\ln(\mathbf{W}(h_x(T))^{\circ -1} - \alpha \mathbf{1})$, which is equivalent to the standard cross-entropy loss applied to the final predicted probabilities $\hat{\mathbf{y}}_x$.

\noindent\textbf{Baselines.} We evaluate Zubov-Net against the following representative models:
\begin{itemize}
	\item \textbf{Standard Architectures:}
		\textbf{ResNet-18} \cite{He_2016_CVPR} serves as the standard discrete-depth baseline.  \textbf{Neural ODE} \cite{NEURIPS2018_69386f6b} provides the foundational continuous-depth formulation without explicit stability constraints.
	
	\item \textbf{Stability-enhanced Neural ODEs:}
	\textbf{LyaNet} \cite{pmlr-v162-rodriguez22a} implements Lyapunov-stable ODEs with exponential stability.
	\textbf{FxTS-Net} \cite{LUO2025107219} incorporates fixed-time stability constraints to ensure convergence within a user-defined time.
	\textbf{SODEF} \cite{NEURIPS2021_7d5430cf} enforces each clean training feature as an asymptotically stable point.
	\textbf{Proj-NODE} enforces global stability by projecting dynamics onto a stable field defined by a Lyapunov network, following \cite{NEURIPS2019_0a4bbced, Chu_Wei_Liu_Zhao_Miyatake_2024}.
\end{itemize}

All compared methods use a ResNet-18 feature extractor. For a fair comparison of the dynamics, $f(\cdot; \theta_f)$ across all methods is implemented as a 2-layer MLP with a hidden size of 256. Following their original designs: LyaNet and FxTS-Net incorporate an additional input feature modulation module; SODEF uses $\sin$ as activation; Proj-NODE, Neural ODE, and Zubov-Net employ $\mathrm{Tanh}$ activation.

\noindent  \textbf{Training Details.}\quad  Our implementation is built upon a public codebase for Lyapunov learning\footnote{\url{https://github.com/ivandariojr/LyapunovLearning}}. We use the Dopri5 solver with an absolute and relative tolerance of $1\times10^{-1}$ for training, $1\times10^{-3}$ for validation, and integration time $T=1$ with $\Gamma=10$. All models are trained using the Adam optimizer $(\beta_1=0.9, \beta_2=0.999)$ with an initial learning rate of $1\times10^{-3}$ and no weight decay. The learning rate is scheduled via MultiStepLR, decaying by a factor of $0.1$ at epoch $90$, for a total of $140$ epochs. A uniform batch size of $128$ is used for training.  Experiments were conducted on an NVIDIA RTX 4090 GPU using PyTorch 2.2.0 and CUDA 12.6.

In Zubov-Net, the Lyapunov classifier is instantiated by a PIACNN $g(\cdot)$ with layer sizes $X_{layer} = [256, 256, 1]$ and $U_{layer} = [256, 256, 0]$, and a convexity coefficient $\delta = 0.5$. Following \cite{liu2025physics}, we set $\Phi(h) = \|h\|_{\mathcal{A}}^2$ in the Zubov equation.  The coefficients in the tripartite losses (Eq. \eqref{eq:oq1} ) are $\lambda_1 = 1.5$, $\lambda_2 =  0.12$, and $\lambda_3= 0.9$. The scaling factor $\beta$ in the separation loss (Eq. \eqref{eq:3.5}) is set to $ 0.85$. The boundary sampling algorithm operates on a subset of $30$ classes per epoch. The sampling directions are strategically constructed: for each class anchor $\mathbf{c}_i$, primary directions are formed by the normalized vectors to other class anchors ($\mathbf{c}_j - \mathbf{c}_i$). Then the primary directions are augmented with 20 random directions per $(i,j)$ pair, which are normalized, scaled by a random factor in $[0, 0.5]$, and added to the primary directions before final normalization. In Algorithm \ref{alg:2}, the gradient projection  is applied for $N_2 = 5$ and $\eta_{2}=1.2$.

\begin{table*}[ht]
	\centering
	\caption{Robustness against five white-box attacks under two perturbation radii. The best results are shown in bold. Unit: \%.} 
	\label{table:attac}
	\begin{tabular}{ccccccccccccc}
		\toprule
		Dataset                  & Method               & \multicolumn{2}{c}{FGSM}        & \multicolumn{2}{c}{PGD}         & \multicolumn{2}{c}{BIM}         & \multicolumn{2}{c}{APGD}        & \multicolumn{2}{c}{Jitter}      & Avg \\
		\cmidrule(r){3-4} \cmidrule(r){5-6} \cmidrule(r){7-8} \cmidrule(r){9-10} \cmidrule(r){11-12}
		\multicolumn{1}{l}{}     & \multicolumn{1}{l}{} & 8/255          & 16/255         & 8/255          & 16/255         & 8/255          & 16/255         & 8/255          & 16/255         & 8/255          & 16/255         & \multicolumn{1}{l}{}     \\ \midrule
		
		\multirow{6}{*}{SVHN} & ResNet18             & 62.96           & 57.23           & 69.87           & 69.56          & 82.76           & 67.26          & 69.01           & 68.56           & 58.41            & 50.48            & 65.61  \\
		& Neural ODE           & 63.86           & 57.93           & 70.48           & 70.05          & 82.84           & 67.55          & 69.38           & 69.08           & 59.17            & 51.02            & 66.14                \\
		& Proj-NODE			   & 65.04 			 & 60.21 	       & 72.17  	     & 71.87  	      & 82.88 	        & 68.09 	     & 71.47 	       & 71.17 	         & 59.57 	        & 51.59 	       & 67.40 \\
		& SODEF                & 64.09           & 59.25           & 71.80           & 71.47          & 83.33           & 69.74          & 70.58           & 70.17           & 60.86            & 53.11            & 67.44                \\
		& FxTS-Net             & 66.10           & 61.35           & 72.23           & 72.07          & 90.22           & 82.48          & 72.15           & 71.64           & 63.53            & 58.09            & 71.01                \\
		& LyaNet               & 65.26           & 60.84           & 72.76           & 72.23          & 84.45           & 70.39          & 71.65           & 71.14           & 61.19            & 53.48            & 68.34                \\
		& Zubov-Net            & \textbf{71.06}  & \textbf{69.49}  & \textbf{72.83}  & \textbf{72.58} & \textbf{92.06}  & \textbf{89.36} & \textbf{72.26}  & \textbf{71.75}  & \textbf{67.38}   & \textbf{63.95}   & \textbf{74.27}       \\ \midrule
		\multirow{6}{*}{CIFAR-10} & ResNet18             & 39.66          & 30.99          & 52.48          & 51.97          & 61.35          & 40.26          & 50.43          & 50.23          & 18.30          & 6.90           & 40.26    \\
		& Neural ODE           & 40.95          & 32.37          & 52.79          & 52.23          & 62.55          & 41.38          & 50.01          & 49.97          & 18.32          & 7.02           & 40.76           \\
		& Proj-NODE  		   & 45.05 			& 38.12 		 & 57.17 		  & 56.77 		   & 60.74 			& 41.02 		 & 55.27 		  & 54.14 		   & 26.24 			& 16.81 		 & 45.13          \\
		& SODEF                & 47.54          & 40.88          & 57.36          & 57.15          & 64.01          & 43.87          & 55.42          & 55.18          & 29.95          & 20.02          & 47.14            \\
		& FxTS-Net             & 49.79          & 48.18          & 56.36          & 55.86          & 71.65          & 65.41          & 54.20          & 54.06          & 31.81          & 22.14          & 50.95                \\
		& LyaNet               & 49.30          & 48.08          & 55.23          & 55.16          & 71.49          & 66.37          & 53.08          & 52.37          & 30.28          & 20.55          & 50.19                \\
		& Zubov-Net            & \textbf{56.83} & \textbf{53.77} & \textbf{59.94} & \textbf{59.37} & \textbf{75.01} & \textbf{67.74} & \textbf{58.03} & \textbf{57.81} & \textbf{44.87} & \textbf{37.78} & \textbf{57.12}  \\ \midrule
		\multirow{6}{*}{CIFAR-100} & ResNet18             & 16.97           & 13.93           & 21.47            & 21.40           & 37.15          & 22.78           & 19.52           & 19.27           & 7.36             & 3.89             & 18.37                \\
		& Neural ODE           & 16.24           & 13.58           & 20.97           & 20.62          & 36.90           & 22.86           & 18.78           & 18.36           & 7.01             & 3.71             & 17.90                \\
		& Proj-NODE	           & 18.86 	         & 17.51 	       & 20.76 	         & 20.46 	      & 41.71 	       & 29.95 	         & 18.92 	       & 18.71 	         & 8.82 	        & 4.51 	           & 20.02 \\
		& SODEF                & 17.57           & 15.83           & 21.33           & 21.17          & 40.38          & 26.97           & 19.47           & 18.86           & 7.78             & 4.06             & 19.34                \\
		& FxTS-Net             & \textbf{19.99}  & \textbf{18.63}  & 20.69          & 20.25          & \textbf{48.90}  & \textbf{35.94}  & 18.66           & 18.26           & 12.53            & 9.67             & 22.35                \\
		& LyaNet               & 18.30            & 16.32           & 21.27           & 21.03          & 46.09          & 33.18           & 19.26           & 18.94           & 12.85            & 9.52             & 21.68                \\
		& Zubov-Net            & 19.53           & 17.29           & \textbf{22.33}  & \textbf{22.14} & 44.47          & 31.83           & \textbf{20.52}  & \textbf{20.22}  & \textbf{16.44}   & \textbf{13.02}   & \textbf{22.78}   \\ \midrule
		\multirow{6}{*}{Tiny-ImageNet}  & ResNet18   & 8.96           & 6.83           & 12.41          & 12.01          & 26.05          & 15.29          & 9.55           & 9.15           & 2.54          & 1.01          & 10.38          \\
		& Neural ODE & 9.36           & 7.22           & 11.87          & 11.36          & 26.08          & 15.30          & 9.96           & 9.21           & 2.67          & 1.08          & 10.41          \\
		& Proj-NODE  & 9.93           & 8.21           & 13.01          & 12.51          & 26.15          & 15.54          & 11.36          & 10.84          & 2.86          & 1.12          & 11.15          \\
		& SODEF      & 10.08          & 8.38           & 13.21          & 12.64          & 26.77          & 15.66          & 10.92          & 10.15          & 2.50          & 1.04          & 11.13          \\
		& FxTS-Net   & 11.79          & 11.08          & 13.13          & 12.83          & \textbf{33.08} & \textbf{26.98} & 11.10          & 10.57          & 4.76          & 3.28          & 13.86          \\
		& LyaNet     & 11.23          & 10.94          & 12.94          & 12.52          & 32.96          & 25.88          & 10.26          & 9.75           & 4.45          & 2.85          & 13.38          \\
		& Zubov-Net  & \textbf{12.13} & \textbf{11.24} & \textbf{13.52} & \textbf{13.04} & 30.36          & 23.55          & \textbf{11.97} & \textbf{11.38} & \textbf{7.22} & \textbf{4.68} & \textbf{13.91}  \\
		\bottomrule
	\end{tabular}
\end{table*}

\subsection{Accuracy and Robustness against Input Perturbations}
\label{sec:exp_main}

This subsection presents a comprehensive evaluation of Zubov-Net's core capability: maintaining high accuracy on clean data while exhibiting strong robustness against both random corruptions and adversarial attacks. We report results on four datasets of increasing complexity (SVHN, CIFAR-10, CIFAR-100, Tiny-ImageNet) in Tables \ref{table:acc_noise} and \ref{table:attac}. The key metric is the average performance (``Avg'') across all perturbation types within each category, reflecting the overall balance between accuracy and robustness.

Zubov-Net achieves the highest average accuracy across all eight stochastic noises on every dataset (Table \ref{table:acc_noise}). It also attains the highest average robustness against five white-box adversarial attacks under two perturbation radii on all datasets (Table \ref{table:attac}). The performance in both evaluations shows consistent improvement, aligning with the goal of mitigating the accuracy-robustness trade-off.

Thorough analysis of the results reveals that the performance advantage of Zubov-Net is consistent but varies in magnitude across different datasets. The lead is most pronounced on CIFAR-10, where it outperforms the strongest prior stability-based baseline by 1.11\% and 6.17\%  in average noise robustness and adversarial robustness, respectively. On the more complex CIFAR-100 and Tiny-ImageNet datasets, while Zubov-Net still secures the top average ranking, the margin narrows. This suggests that ensuring well-separated attraction basins becomes increasingly challenging as the number of classes grows, presenting a natural scalability limit for geometric control methods.

These empirical results directly reflect the core design principles of our framework. The high clean accuracy is consistent with the expectation that the classification loss, operating within the unified Lyapunov classifier architecture, successfully guides trajectories of clean inputs to converge to the correct equilibrium. The superior average robustness indicates that the Zubov-driven consistency loss effectively aligns the prescribed attraction basins (PRoAs) with the true dynamical RoAs, thereby reducing misaligned and vulnerable regions in the state space.

In summary, the results affirm that Zubov-Net's paradigm of learning aligned stability through a unified architecture, Zubov-driven matching, and active geometric control yields a model with superior and balanced performance against various perturbations. The following sections will provide further geometric evidence.

\begin{figure*}[!t] 
	\centering
	\includegraphics[width=\textwidth]{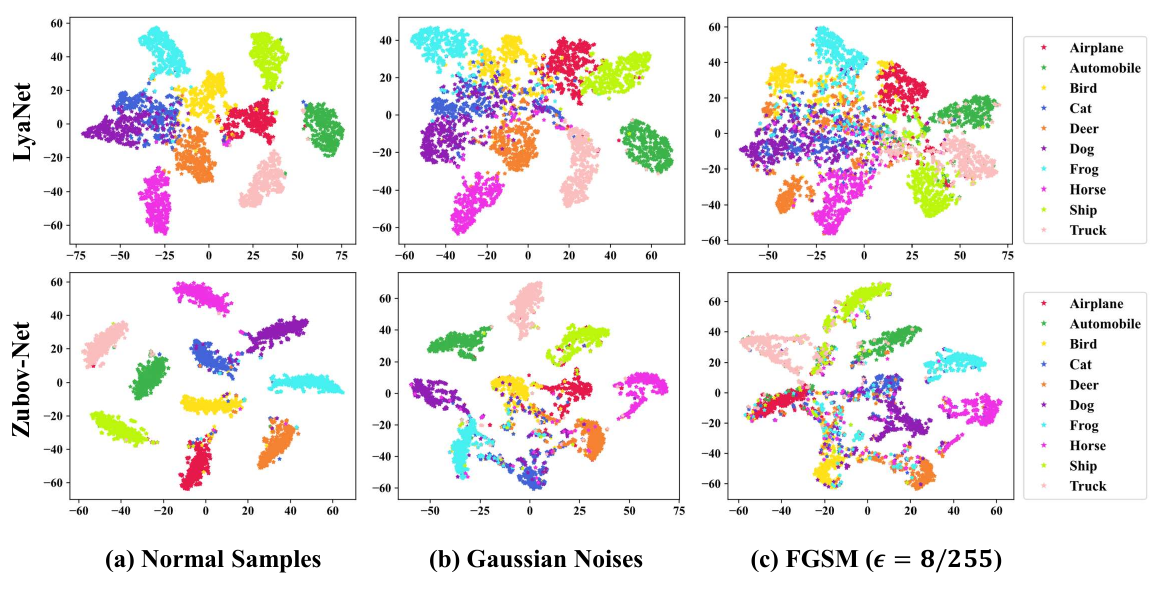} 
	\caption{TSNE visualization results on classification features of the LyaNet and Zubov-Net using CIFAR-10 with 400 random images per category. From left to right, the normal data, the FGSM attack perturbation, and the Gaussian noise. Note: Points represent individual image features; color indicates the ground-truth class label.} 
	\label{fig:tsne}
	\vspace{-10pt}
\end{figure*}

\subsection{Inspecting Decision Boundaries}
To further validate the effect of our method, we visualize the feature representations of LyaNet and Zubov-Net on CIFAR-10 under different conditions using t-SNE, including clean samples, Gaussian noise, and FGSM adversarial attacks.

As shown in Figure \ref{fig:tsne}, Zubov-Net exhibits clearer inter-class separation and more compact intra-class clustering across all settings. Under clean data, the boundaries between semantically similar classes (e.g., cats and dogs) are more distinct compared to LyaNet. When perturbed by Gaussian noise or FGSM attacks, Zubov-Net maintains better structural integrity of feature clusters, while the features of LyaNet become significantly scattered and overlapping.

These visualizations provide powerful geometric evidence for the success of our framework.  The invariant structural integrity under diverse perturbations shows that  Zubov-Net appropriately imposes stability conditions, transforming them from a theoretical constraint into an observable geometric property. This robustness stems from the synergistic action of our tripartite objective: the unified architecture and the Zubov-driven matching mechanism work to align stable attraction basins with decision regions, while the active separation control enhances the margins between them. Consequently, perturbed inputs are more likely to remain within the correct, well-separated basin, manifesting as resilient feature clusters in the visualization. This provides geometric evidence that learning aligned stability can reconcile discriminative power with robustness.

\begin{table}[!t]
	\centering
	\caption{Robustness on CIFAR-10 under strong adversarial attacks. Clean accuracy (“Orig”) and robustness against VNI, Square, and AutoAttack (AA) attacks ($\varepsilon=16/255$) are compared across stability-based methods, adversarial training methods, and the combined Zubov‑Net (TRADES).} 
	\label{table:extend_attack}
	\begin{tabular}{ccccc}
		\toprule
		\textbf{Method}   & \textbf{Orig} & \textbf{VNI} & \textbf{Square}  & \textbf{AA} \\ \midrule
		Proj-NODE         & 90.03         & 19.70        & 16.89                  & 3.31        \\
		LyaNet            & 88.34         & 22.87        & 16.15                 & 4.32        \\
		Zubov-Net         & \textbf{91.29}         & 23.11        & 24.91                  & 6.84        \\
		PGD-AT            & 87.89         & 57.11        & 37.21                 & 12.25       \\
		MART              & 87.42         & 55.17        & 42.30                & 24.64       \\
		TRADES            & 87.90         & 56.04        & 45.33                & 25.13       \\
		Zubov-Net (TRADES) & 88.81         & \textbf{58.43}        & \textbf{47.24}             &\textbf{ 25.81 }     \\ \bottomrule
	\end{tabular}
	\vspace{-10pt}
\end{table}

\subsection{Broader Robustness Evaluation and Synergy with Adversarial Training}

To assess robustness under more challenging threat models, three additional attacks are introduced: the transfer-based black-box attack VNI \cite{9578612}, where adversarial examples are generated by attacking pre-trained Neural ODE and ResNet-18 models on CIFAR-10; the query-based black-box attack Square \cite{10.1007/978-3-030-58592-1_29}; and the ensemble attack AutoAttack (AA) \cite{pmlr-v119-croce20b}. We also examine the synergy between Zubov-Net and adversarial training by combining it with TRADES, where the TRADES loss is applied to the probabilistic output $\hat{\mathbf{y}}_x = \tilde{\psi}(h_x(T); C_\psi)$ of the Lyapunov classifier. This combined model, denoted Zubov‑Net (TRADES), is compared with standard adversarial training methods (PGD‑AT, MART \cite{Wang2020Improving}, and TRADES \cite{pmlr-v97-zhang19p}).

The results in Table \ref{table:extend_attack} show the following observations. Among the three stability-based methods (Proj-NODE, LyaNet, and Zubov-Net), Zubov-Net outperforms all baselines on all three attacks: VNI (23.11\% vs. 19.70\% and 22.87\%), Square (24.91\% vs. 16.89\% and 16.15\%), and AA (6.84\% vs. 3.31\% and 4.32\%). Zubov-Net also maintains the highest clean accuracy among these methods (91.29\%).

When Zubov‑Net is fused with TRADES, the model achieves the best robustness across almost all attack types while maintaining high accuracy. The robustness derived from geometric principles differs in nature from the robustness obtained through training with adversarial data. The improvement after combining Zubov‑Net with TRADES demonstrates that its “learning aligned stability” framework is compatible and synergistic with the adversarial training.

\subsection{Computational Efficiency Analysis}
To address practical deployment concerns, we compare the computational efficiency of Zubov-Net against key baselines. Table \ref{table:efff} reports the number of parameters, training time per epoch, training memory footprint, and inference time per epoch on CIFAR-10.

Zubov-Net introduces a moderate increase in training time per epoch (48.74s) compared to the standard Neural ODE (35.39s). This overhead is attributed to the online computations essential for its active control paradigm, namely parallel boundary sampling and counterexample generation. Notably, its training cost remains lower than that of the projection-based method (Proj-NODE 57.94s), as the projection-based method requires per-step projection computations that are tightly coupled with the ODE solver's discretization.

In terms of model complexity and inference efficiency, Zubov-Net is competitive with lightweight baselines. Its parameter count (12.0M) and memory usage during training (0.2458 GB) are comparable to Neural ODE and Proj-NODE, and significantly lower than LyaNet (19.6M, 0.3867 GB). Crucially, the inference speed of Zubov-Net (1.219 s) is nearly identical to the standard Neural ODE and is substantially faster than LyaNet (5.438 s) and Proj-NODE (4.16 s). This efficiency stems from the design of its unified architecture: at inference, only a forward pass through the Neural ODE and the Lyapunov classifier is required, without additional stabilization operations.
\begin{table*}[ht]
	\centering
	\caption{Computational profile comparison on CIFAR-10. Training time, inference time, and training memory are measured per epoch (batch size = 128) and averaged over 5 epochs. Training memory is the average of memory usage per batch within the epoch.} 
	\label{table:efff}
	\begin{tabular}{ccccc}
		\toprule
		\textbf{Model} & \textbf{Params (M)} & \textbf{Train Time/Epoch (s)} & \textbf{Train Memory (GB)} & \textbf{Inference Time/Epoch (s)} \\  \midrule
		ResNet18       & 11.2                & 23.09                         & 0.2331                     & 0.688                              \\
		Neural ODE     & 11.4                & 35.39                         & 0.2405                     & 1.196                              \\
		LyaNet         & 19.6                & 41.61                         & 0.3867                     & 5.438                              \\
		Proj-NODE      & 11.6                & 57.94                         & 0.2418                     & 4.160                               \\
		Zubov-Net      & 12.0                  & 48.74                         & 0.2458                     & 1.219                             \\ \bottomrule
	\end{tabular}
\end{table*}

\begin{table*}[ht]
	\centering
	\caption{Classification accuracy and robustness for ablation study on CIFAR-10 ($\varepsilon=8/255$). ``Original'' denotes accuracy on clean data. \textsuperscript{†} The mean $\frac{dW}{dt}f(h;\theta_f)$ along trajectories is reported (more negative indicates stronger theoretical attraction). The ``Avg'' column reports the average robustness accuracy over the ten corruption and attack types (excluding clean accuracy). The best result is in bold. Unit: \%. } 
	\label{table:ab1}
	\begin{tabular}{lccccccccccccc}
		\toprule
		\textbf{Methods} & \textbf{Orig} & \textbf{Gauss} & \textbf{Glass} & \textbf{Shot} & \textbf{Speckle} & \textbf{Motion} & \textbf{FGSM} & \textbf{PGD} & \textbf{BIM} & \textbf{APGD} & \textbf{Jitter} & \textbf{Avg} & \textbf{$\frac{dW}{dt}f(h;\theta_f)$\textsuperscript{†}} \\ \midrule
		
		$\mathcal{L}_{cla}$ + $\mathcal{L}_{FC}$  & 90.87 & 67.99 & 72.42 & 63.23 & 78.69 & 82.28 & 52.53 & 57.58 & 71.74 & 55.86 & 42.94 & 64.53 & -- \\
		w/o $\mathcal{L}_{con}$ & 91.05 & 69.78 & 74.51 & 65.28 & 77.78 & 82.68 & 54.61 & 58.31 & 73.04 & 56.54 & 44.37 & 65.69 & -- \\ 
		w/o $\mathcal{L}_{sep}$ & 90.74 & 69.52 & 75.11 & 64.96 & 77.45 & 82.61 & 54.78 & 58.36 & 74.05 & 56.45 & 43.87 & 65.72 & -0.4 \\
		w/o $\mathcal{L}_{sep}$ (Ineq) & 89.81 & 69.46 & 70.52 & 64.03 & 76.47 & 80.30 & 53.73 & 57.63 & 71.59 & 56.31 & 43.25 & 64.33   & \textbf{-0.8} \\ \midrule
		Ours & \textbf{91.29} & \textbf{71.06} & \textbf{76.43} & \textbf{66.71} & \textbf{78.80} & \textbf{82.86} & \textbf{56.83} & \textbf{59.94} & \textbf{75.01} & \textbf{58.03} & \textbf{44.87} & \textbf{67.05} &  --\\  \midrule
		ICNN  & 90.58 & 68.15 & 74.39 & 64.73 & 77.21 & 81.10 & 54.06 & 57.71 & 73.79 & 56.74 & 42.86 & 65.07 & --\\
		PICNN  & 91.14 & 69.79 & 75.31 & 65.06 & 77.42 & 81.27 & 55.46 & 58.14 & 74.17 & 57.25 & 43.16 & 65.70 & --\\ 
		\bottomrule   
	\end{tabular}
\end{table*}

\subsection{Ablation Experiments}

To further evaluate the effectiveness and practicality of our framework, we conduct ablation experiments on CIFAR-10, with more experiments provided in the Appendix \ref{app:exp}.

\noindent \textbf{Loss component ablation.} As shown in Table \ref{table:ab1}, using the classification losses ($\mathcal{L}_{cla}$ and $\mathcal{L}_{FC}$) achieves high clean accuracy but performs poorly under all perturbations. Removing the separation loss ($\mathcal{L}_{sep}$) leads to a clear drop in adversarial robustness, especially against iterative attacks like PGD and BIM. Removing the consistency loss ($\mathcal{L}_{con}$) mainly reduces performance under stochastic noise perturbations such as Glass and Shot. The full model achieves the best overall performance, indicating that both $\mathcal{L}_{sep}$ and $\mathcal{L}_{con}$ are necessary for improving robustness against adversarial and stochastic perturbations, respectively.

\noindent \textbf{Further analysis on the Zubov-driven alignment.} We also test a variant (shown in Table \ref{table:ab1}) where the Zubov equation in $\mathcal{L}_{con}$ is relaxed to an inequality using ReLU. This modification allows the Lyapunov derivative to be more negative, as indicated by the mean $\frac{dW}{dt}f(h;\theta_f)$ value of ‑0.8 (compared to ‑0.4 for the standard w/o $\mathcal{L}_{sep}$). However, as shown in the row ``w/o $\mathcal{L}_{sep}$ (Ineq)'', this variant exhibits lower clean accuracy (89.81\% vs. 90.74\%) and lower robustness under all evaluated perturbation types compared to the standard w/o $\mathcal{L}_{sep}$ configuration. Within this experimental setting, the stronger stability condition does not lead to improved robustness; instead, it is associated with degraded performance across both accuracy and robustness metrics.

\noindent \textbf{Lyapunov classifier architecture analysis.}
We further investigate the impact of the Lyapunov classifier architecture by comparing three designs: ICNN (using $L$ independent networks), PICNN (a single partially input-convex network), and our proposed PIACNN (which incorporates an input-attention mechanism into PICNN). As shown in Table \ref{table:ab1}, ICNN achieves an average robustness accuracy of 65.07\%, while PICNN reaches 65.70\%. Our PIACNN architecture attains the highest average robustness (67.05\%), outperforming PICNN by 1.35\%. This suggests that the attention mechanism in PIACNN enhances the discrimination of the Lyapunov function by focusing on equilibrium-relevant features, which in turn refines the geometry of the prescribed regions of attraction (PRoAs), thereby mitigating the accuracy-robustness trade-offs.

\section{Conclusion}
This work addresses the accuracy-robustness trade-off in Neural ODEs by resolving the fundamental misalignment between stability regions and decision boundaries. The proposed Zubov-Net framework introduces a unified architecture where learnable Lyapunov functions serve as multi-class classifiers, ensuring prescribed regions of attraction (PRoAs) inherently align with classification objectives. Within this architecture, a Zubov-driven matching mechanism reformulates Zubov’s equation into a differentiable consistency loss, aligning PRoAs with the true dynamical regions of attraction (RoAs). This alignment enables an active control paradigm: the geometry of RoAs is directly shaped by optimizing PRoAs through a tripartite loss and a parallel boundary sampling algorithm. The Lyapunov classifier is implemented via a Partially Input-Attention-based Convex Neural Network (PIACNN), whose attention mechanism focuses on equilibrium-relevant features while acting as weight normalization.

Theoretically, minimizing the tripartite loss guarantees PRoA-RoA alignment, trajectory containment, and non-overlapping PRoAs. A certified robustness radius is derived, linking the aligned basin geometry directly to robustness through Lipschitz continuity. Furthermore, by adopting a pointwise separation strategy, we achieve tighter probability bounds and lower dimensionality requirements, justifying the convex design of the Lyapunov classifier. Experimentally, Zubov-Net maintains high accuracy while significantly improving robustness against stochastic noises and adversarial attacks on SVHN, CIFAR-10, CIFAR-100, and Tiny-ImageNet. Its effectiveness is validated under strong adaptive attacks (AutoAttack), black-box settings, and synergistic combination with adversarial training (TRADES). Ablation studies validate each component, and efficiency analysis shows competitive training/inference costs.

Future Work will proceed along three axes. First, we will apply the aligned stability framework to time-series forecasting, reinforcement learning, and other dynamical systems beyond image classification. Second, we will extend Zubov-Net to Graph Neural Networks, Transformers, and NAS-based robust designs for domain-specific tasks. Third, we will deepen the theory to cover scenarios where the consistency loss is only approximately minimized, and generalize robustness guarantees to nonlinear systems with complex attractors, including limit cycles and non-convex stability regions.

\bibliographystyle{IEEEtran}
\bibliography{ref}
\begin{IEEEbiographynophoto}{Nan-jing Huang }
	 has a Ph.D. in Mathematics from Sichuan University, China.  Since 1998,  he is Professor in Mathematics at Sichuan University, China.  He is in the editorial board of Numerical Algebra, Control and Optimization. His research interests include variational inequality and complementarity problems, neural networks and nonlinear optimization.
\end{IEEEbiographynophoto}

\begin{IEEEbiographynophoto}{Chaoyang Luo}
	received the M.S. degree from the School of Mathematical Sciences, Chongqing Normal University, China. He is currently pursuing the Ph.D. degree in Mathematics at Sichuan University, China. His current research focuses on adversarial security in machine learning, robustness of deep learning models, and stability control theory for dynamical systems.
\end{IEEEbiographynophoto}
\begin{IEEEbiographynophoto}{Yan Zou}
	received the M.S. degree in College of Computer Science And Engineering, Chongqing University of Technology, China. She is currently serving as an Assistant Researcher	at Yibin University. Her research interests include AI security, 
	deepfake detection, and low-quality image recognition.
\end{IEEEbiographynophoto}

\onecolumn
{\appendices
\section{Parallel Boundary Sampling Algorithm (Algorithm \ref{alg:parallel}): Details and Complexity Analysis}
\label{app:Alg2}
\noindent \textbf{Algorithm Core.}\quad Algorithm \ref{alg:parallel} parallelly samples approximate level-set (boundary) points of a given strongly convex Lyapunov function $W$ at a target value $\rho$. For a set of $d^h$ predetermined search directions $\{\hat{q}_i\}$ centered at an equilibrium $c$, the algorithm finds, in parallel for each direction $i$, a scalar $s_i$ such that $W(c + s_i \hat{q}_i) \approx \rho$.

The algorithm is essentially a parallel adaptive binary search with early termination. It adjusts the current path length $s_i$ and an adaptive step size $a_i$ for each direction.  In every iteration, all active directions simultaneously propose a new point $P = c + (s + a) \circ \hat{Q}$. Based on comparing the function values $W(P)$ with the target interval $[\rho-\epsilon, \rho+\epsilon]$, a direction mask $M$ is generated (1 for outward move, -1 for inward move, 0 for converged). The step‑size update rule $\Lambda = \frac{|M + \operatorname{sign}(a)| + 2}{4}$ implements the classic bisection strategy: when the move direction agrees with the sign of the step (i.e., continuing exploration along the current direction), the step size remains unchanged ($\Lambda=1$); when a reversal is needed (the target interval has been overshot), the step size is halved ($\Lambda=0.5$) and its sign is flipped. This operation converts logical branching into a unified tensor calculation, enabling efficient parallel execution.

\noindent \textbf{Computational Complexity.}\quad Let $n$ be the maximum number of iterations and $N = d^h$ the initial number of directions. In each iteration, the dominant cost is:
\begin{itemize}
	\item \textbf{Forward propagation:} evaluating $W(P)$ for the current batch of active points in parallel. The complexity is $O(C_W)$, where $C_W$ denotes the cost of evaluating the batch (not a single point).
	\item \textbf{Comparison and logical operations:} generating the mask $M$, updating the step sizes, and pruning the active set $\mathcal{B}$ are all vectorized operations of cost $O(|\mathcal{B}|)$. This operation is almost completely overlapped with the forward propagation of $W$ and effectively fused into the parallel computation graph without becoming a performance bottleneck.
\end{itemize}

Because the algorithm processes all still-active directions in parallel, the wall-clock time per iteration is dominated by the batch forward-propagation cost, $C_W$. Consequently, even in the worst case where all directions require the full $n$ iterations, the total computational overhead is approximately $O(n \cdot C_W)$, with $C_W$ already accounting for parallelization over the batch size $|\mathcal{B}|$. In practice, thanks to the exponential convergence of binary search and the monotonicity of the strongly convex function values, most directions converge early, so the active set size $|\mathcal{B}|$ shrinks rapidly, further reducing the actual computational burden.

\noindent \textbf{Convergence Guarantee.}\quad For any fixed search direction, the process reduces to finding the root of $W(\cdot) = \rho$ along the ray $\{c + s \hat{q} \mid s \ge 0\}$. Strong convexity of $\mathcal{V}$ and strict monotonicity of $W$  guarantees monotonicity along this ray. The adaptive step‑size update ensures that once the function value crosses the target interval $[\rho-\epsilon, \rho+\epsilon]$, the subsequent search interval (the range of possible $s$ values) shrinks by a factor of $0.5$ exponentially. Hence, for a prescribed tolerance $\epsilon$, the algorithm converges after at most $O(\log_2(\Delta / \epsilon))$ iterations, where $\Delta$ is the length of the initial guess interval. This logarithmic iteration bound guarantees the practical efficiency of the algorithm.

\section{Additional proof details}

\subsection{Proof of Proposition \ref{propos:1} }
\label{app:pro1}

\begin{proof}
	We prove the convexity of $g_k(x, c)$ by mathematical induction over the network depth $k$. It is easy to see the following fact: for any convex non-decreasing function $\sigma$, nonnegative vector $a$, and vector-valued convex function $G(x) = [g^j(x)]_{j=1}^d$, the composition $\sigma(a^\top G(x) + b)$ is convex in $x$. This follows because $a^\top G(x) + b$ is convex when $a \geq 0$ (as nonnegative sums of convex functions preserve convexity), and because composition with a convex non-decreasing function $\sigma$ preserves convexity \cite{boyd2004convex}. Thus, for $k=1$, the first layer output is  
	\begin{equation*}
		g_1(x, c) = \sigma_0\Big( A_0^{(z)} \big( \mathrm{softmax}(A_0^{(zu)} c + b_0^{(z)}) \circ x \big) + A_0^{(u)} c + b_0 \Big).
	\end{equation*}
   This shows the convexity holds since the argument of $\sigma_0$ is affine in $x$ (because the softmax term is independent of $x$), and $\sigma_0$ is convex and non-decreasing by assumption. Now assume $g_k(x, c)$ is convex in $x$. We need to show that $g_{k+1}(x, c)$ is convex in $x$. To this end, consider the $(k+1)$-th layer:
	\begin{align*}
		g_{k+1}(x, c) = \sigma_k\Big( & A_k^{(z)} \big( \mathrm{softmax}(A_k^{(zu)} u_k + b_k^{(z)}) \circ G_k(x, c) \big) \nonumber \\
		+ & A_k^{(x)} \big( \mathrm{softmax}(A_k^{(xu)} u_k + b_k^{(x)}) \circ x \big) + A_k^{(u)} u_k + b_k \Big),
	\end{align*}
	where $G_k(x, c) = [g_k^j(x, c)]_{j=1}^{d_k}$. By the induction hypothesis, each $g_k^j(x, c)$ is convex. The term $A_k^{(z)} (\mathrm{softmax}(\cdots) \circ G_k(x, c))$ is convex because element-wise multiplication with nonnegative softmax outputs preserves convexity, and matrix multiplication by nonnegative $A_k^{(z)}$ maintains the convexity. The term $A_k^{(x)} (\mathrm{softmax}(\cdots) \circ x)$ is affine in $x$ and so is convex. The sum of these convex terms remains convex, and the outer composition with convex non-decreasing $\sigma_k$ preserves the convexity. Therefore, $g_{k+1}(x, c)$ is convex in $x$. By mathematical induction, the convexity property holds for all $k \geq 1$.
\end{proof}

\subsection{Proof of Proposition \ref{pro:convex-sep}}
\label{app:pro2}

\begin{proof}
Throughout the proof, we denote the cardinality of a set $S$ by $|S|$. For the reader's convenience, we also recall that, for $n \in \mathbb{N}$, the symmetric group $S_n$ consists of all permutations (i.e., bijections) on the set $\{1, 2, \ldots, n\}$, and that $|S_n| = n!$. Consider first the case where $d \ge N +1$. For each $y^{(j)} \in X_2$, consider the extended set $\{x^{(1)}, \dots, x^{(N)}, y^{(j)}\} \subset \mathbb{R}^d$. Define the label vector $b^{(j)} \in \mathbb{R}^{N+1}$:
\begin{equation*}
	b^{(j)}_i = \begin{cases} 
		1 & \text{for } i = 1,\dots,N; \\
		-1 & \text{for } i = N+1.
	\end{cases}
\end{equation*}

Form the data matrix $A_j = \left[ x^{(1)\top}, \ldots, x^{(N)\top}, y^{(j)\top} \right]^\top \in \mathbb{R}^{(N+1) \times d}$, which has full row rank $\operatorname{rank}(A_j) = N+1$ almost surely since $d \ge N+1$ and the points are i.i.d. and uniformly distributed on $[-1,1]^d$. Therefore, the linear system $A_j a_j = b^{(j)}$ has a solution $a_j \in \mathbb{R}^d$ with probability 1. From the solution $a_j$, define the open half-space as $H_j^+ = \{ x \in \mathbb{R}^d \mid a_j^\top x > 0 \}$. By construction, we have $a_j^\top y^{(j)} = -1 < 0$ and $	a_j^\top x^{(i)} = 1 > 0 $ for all $ x^{(i)} \in X_1 $.Hence $X_1 \subset H_j^+$ and $y^{(j)} \notin H_j^+$ for each $j=1,\dots,M$. We define $C$ as the intersection of these half-spaces: $C = \bigcap_{j=1}^M H_j^+$. 

Now we verify the required properties. First, $C$ is convex because each $H_j^+$ is convex (as a half-space) and the intersection of convex sets is convex. Second, $C$ contains $X_1$ because for any $x^{(i)} \in X_1$ and for every $j$, we have $a_j^\top x^{(i)} > 0$ and hence $x^{(i)} \in H_j^+$, so $X_1 \subset C$. Third, $C$ excludes $X_2$ because for each $y^{(j)} \in X_2$, we have $a_j^\top y^{(j)} < 0$ and thus $y^{(j)} \notin H_j^+$. Then, $y^{(j)} \notin C$, which implies $X_2 \cap C = \emptyset$. Thus, $X_2 \cap \operatorname{co}(X_1) = \emptyset$ holds with probability 1 because for each $j$, the linear system $A_j a_j = b^{(j)}$ has a solution almost surely, and there are finitely many $j$ (namely $M$). 

Next, we consider the general case where $d \in \mathbb{N}$ and in general it may be the case that $d < N+1$. Our goal is to bound the probability $P = \mathbb{P}\left(X_2 \cap \operatorname{co}(X_1) = \emptyset \right)$. Following the key insight from the $d \ge N+1$ case, we decompose the probability using conditional independence and Jensen's inequality (since $M\ge1$ and $t\mapsto t^M$ is convex on $[0,1]$) 
\begin{equation}
	P = \mathbb{E}_{X_1} \left[ \prod_{j=1}^M \mathbb{P}\!\left(y^{(j)} \notin \operatorname{co}(X_1) \mid X_1\right)\right] \ge \prod\limits_{j = 1}^M {\mathbb{P}\left( y^{(j)} \notin  \operatorname{co}(X_1) \right)}. \label{eq:q}
\end{equation}

For any $y^{(j)}  \in X_2$, we adopt the per-coordinate separation to bound the probability $\mathbb{P}\left( y^{(j)} \notin  \operatorname{co}(X_1) \right)$. Defining coordinate separation events for each dimension $k = 1, \dots, d$, one has $A_k^{(j)} = \left\{ y_k^{(j)} < \min_{i=1}^N x_k^{(i)} \right\}$ and $B_k^{(j)} = \left\{ y_k^{(j)} > \max_{i=1}^N x_k^{(i)} \right\}$. If any event $A_k^{(j)}$ or $B_k^{(j)}$ occurs, then $y^{(j)}$ lies outside $\operatorname{co}(X_1)$ since it is separated from the convex hull using coordinate $k$. Thus, the probability lower bound is  
\begin{align}
	\mathbb{P}\left(y^{(j)} \notin \operatorname{co}(X_1)\right) & \geq \mathbb{P}\left( \bigcup_{k=1}^d \left( A_k^{(j)} \cup B_k^{(j)} \right) \right) \nonumber \\
	&= 1 - \mathbb{P}\left( \bigcap_{k=1}^d \left( (A_k^{(j)})^c \cap (B_k^{(j)})^c \right) \right) \nonumber\\
   & =1 - \mathbb{P}\left( \min_i x_k^{(i)} \leq y_k^{(j)} \leq \max_i x_k^{(i)} \quad \forall k \right)  \nonumber \\
	& = 1 - \prod_{k=1}^d \mathbb{P}\left( \min_i x_k^{(i)} \leq y_k^{(j)} \leq \max_i x_k^{(i)} \right) \quad \text{(By coordinate independence)} \nonumber\\
	& = 1 - \prod_{k=1}^d \left(1 -  \mathbb{P}\left( A_k^{(j)} \cup B_k^{(j)} \right)\right). \label{eq:q1}
\end{align} 

Since $A_k^{(j)} \cap B_k^{(j)} = \emptyset$, we have $\mathbb{P}\left( A_k^{(j)} \cup B_k^{(j)} \right) = \mathbb{P}\left( A_k^{(j)} \right)+ \mathbb{P}\left( B_k^{(j)}\right) $. The probability $\mathbb{P}\left( A_k^{(j)} \right)$ is determined by the order statistics of the combined samples $\{x_k^{(1)}, \cdots,x_k^{(N)},y_k^{(j)} \}$. Specifically, $A_k^{(j)}$ occurs iff all $x$-samples are smaller than $y$-samples in coordinate $k$. Let $S \subset S_{N+1}$ be the set of permutations where the first $N$ positions are occupied by $\{x_k^{(1)}, \cdots,x_k^{(N)} \}$ and the last one is $y_k^{(j)}$, one has $\mathbb{P}\left( A_k^{(j)} \right) =\frac{|S|}{|S_{N+1}|} =\frac{N!}{(N+1)!}=\frac{1}{N+1}$. Similarly, we can conclude that $\mathbb{P}\left( B_k^{(j)} \right) = \frac{1}{N+1}$.  With combining Equation \eqref{eq:q1} and \eqref{eq:q}, the overall probability bound is $P \geq \left( 1 - \left( \frac{N-1}{N+1} \right)^d \right)^M$.
\end{proof}

\subsection{Proof of Proposition \ref{propos:2}}
\label{app:pro3}

\begin{proof}
	 By the construction of $W_i$ and assumptions on $\Phi_i$, each $W_i$ satisfies the structural conditions of Lemma \ref{lem:2.2} for the invariant set $\{c_i\}$. First, consider the loss function $	\mathcal{L}_{con}(\theta) = \sum_{i=1}^L \sup_{h \in D_{W_i}} l_{con}^i(h)$, where $l_{con}^i(h) \geq 0$. The condition $\mathcal{L}_{con}(\theta^*) = 0$ holds if and only if for every $i$, $\sup_{h \in D_{W_i}} l_{con}^i(h) = 0$, which implies $l_{con}^i(h) = 0$ for all $h \in D_{W_i}$. By Lemma \ref{lem:2.2}, this is equivalent to $D_{W_i} = \mathcal{D}_{f}(c_i)$ for each $i$ and we denote $D^* = \bigcup_{c_i \in \mathcal{A}} D_{W_i}$. 
	
	To establish set equality, we prove containment in both directions.  For any $h \in \mathcal{D}_{f}(\mathcal{A})$, by definition of the region of attraction, $\lim_{t \to \infty} \| \omega_{\theta_f}(t, h) \|_{\mathcal{A}} = 0$, where $\omega_{\theta_f}$ is a solution to Equation \eqref{eq:2.2}. By continuity of solutions, there exists some $c_i \in \mathcal{A}$ such that $\lim_{t \to \infty} \| \omega_{\theta_f}(t, h) - c_i \| = 0$.Thus, $h \in \mathcal{D}_{f}(c_i) \subseteq D^*$, proving $\mathcal{D}_{f}(\mathcal{A}) \subseteq D^*$. On the other hand, for any $h \in D^*$, there exists $c_i \in \mathcal{A}$ such that $h \in D_{W_i} = \mathcal{D}_f(c_i)$. Since $c_i \in \mathcal{A}$, by definition of regions of attraction, one has $\mathcal{D}_{f}(c_i) \subseteq \mathcal{D}_{f}(\mathcal{A})$. Thus, we have $D^* \subseteq \mathcal{D}_{f}(\mathcal{A})$. The set equality follows from mutual containment, completing the proof.
\end{proof}

\subsection{Proof of Proposition \ref{propos:3}}
\label{app:pro4}

\begin{proof}
	From Proposition \ref{propos:2}, $\mathcal{L}_{con}(\theta^*) = 0$ implies $D_{W_i} = \mathcal{D}_{f}(c_i)$ for each $i$, where $\mathcal{D}_{f}(c_i)$ is the true region of attraction for $c_i$. By construction, each $W_i$ is positive definite near $c_i$ with $\dot{W}_i$ negative definite, ensuring $c_i$ is an equilibrium point and asymptotically stable.
	Assume for contradiction that $D_{W_1} \cap D_{W_2} \neq \emptyset$. Take $h^* \in D_{W_1} \cap D_{W_2}$ and choose $s > 0$ such that $\max\{W_1(h^*), W_2(h^*)\} \leq s < 1$. We define the compact set $\Omega = \{ h \in \mathbb{R}^n : W_i(h) \leq s \ \text{for}\ i=1,2 \}$, which contains $h^*$. Compactness follows from: (1) continuity of $W_i$, (2) boundedness due to $\lim_{\|h\|\to\infty} W_i(h) = 1 > s$, and (3) closedness by construction. Let $\omega(t) = \omega_{\theta_f}(t, h^*)$ be the solution to Equation \eqref{eq:2.2}. For the evolution of \(\omega(t)\), there are two possibilities: either there exists a minimal \(t_1 > 0\) such that \(\omega(t_1) \in \partial\Omega\), or \(\omega(t) \in \Omega\) for all \(t \ge 0\).
	
	\textit{Case 1:} The trajectory $\omega(t)$ first hits the set $\Omega$ at some finite time $t_1 > 0$, i.e., $\omega(t_1) \in \partial \Omega$. At the point $h = \omega(t_1)$, we have $\min \left\{ \dot{W}_1(h), \dot{W}_2(h) \right\} = 0$; otherwise, by continuity, there would exist $t_2 > t_1$ such that $\omega(t_2) \in \Omega$, contradicting the definition of $t_1$ as the first hitting time.
	First, assume both derivatives vanish at $h$, i.e., $\dot{W}_1(h) = 0$ and $\dot{W}_2(h) = 0$. Since the Zubov equation holds ($\mathcal{L}_{con} = 0$), we have:
	\begin{align*}
		\dot{W}_1(h) = 0 &\Rightarrow -\Phi_1(h)(1 - W_1(h)) = 0, \\
		\dot{W}_2(h) = 0 &\Rightarrow -\Phi_2(h)(1 - W_2(h)) = 0.
	\end{align*}
	Given that $W_i(h) < 1$ for $i = 1, 2$, the terms $(1 - W_i(h))$ are positive. Therefore, these equations imply $\Phi_1(h) = 0$ and $\Phi_2(h) = 0$. By the positive definiteness of $\Phi_i$, this forces $h = c_1$ and $h = c_2$, leading to the contradiction $c_1 = c_2$.
	Now, without loss of generality, suppose $\dot{W}_1(h) = 0$ but $\dot{W}_2(h) < 0$. From $\dot{W}_1(h) = 0$ and the Zubov equation, it follows that $\Phi_1(h) = 0$. By positive definiteness, this implies $h = c_1$. However, $\dot{W}_2(h) < 0$ implies $f(h; \theta_f) \neq 0$, which contradicts the fact that $c_1$ is an equilibrium point.

	\textit{Case 2:} $\omega(t) \in \Omega$ for all $t \geq 0$. By LaSalle's invariance principle \cite{Khalil:1173048}, since $\dot{W}_i(\omega(t)) \leq 0$ and $\Omega$ compact, $\omega(t)$ approaches the largest invariant set where $\dot{W}_1 = \dot{W}_2 = 0$. From the Zubov equation $ \dot{W}_i = -\Phi_i(h)(1-W_i(h))$ and the positive definiteness of $\Phi_i$, we have that $\dot{W}_i(h) = 0$ if and only if $h = c_i$ (since $1-W_i(h) > 0$ for $h \ne c_i$), which is contradiction the fact that $c_1 \ne c_2$. Therefore, the intersection must be empty.
\end{proof}

\subsection{Proof of Proposition \ref{propos:4}}
\label{app:pro5}

\begin{proof}
As established earlier, we can easily obtain that $l_{con}^i(h)$ is zero for any $h \in D_{W_i}$ and $i \in \left\{ {1, \cdots ,L} \right\}$ since \( \mathcal{L}_{con}(\theta^*) = 0 \).  Thus, for the input data $(x,y)$ one has ${\dot W_y}(h) =  - {\Phi _y}(h)(1 - {W_y}(h))$ for all  $h \in D_{W_y}$. To further illustrate the conclusion,  assume to the contrary that  $\left\{ {{h_x}\left( t \right):t \in \left[ {0,T} \right]} \right\}  \not\subset  D_{W_y} $, i.e., there exists $t_1\in (0,T)$ satisfying $h_x(t_1) \in \partial D_{W_y}$ and $h_x(t) \in  D_{W_y}$ for  $t_1< t \le T$. Letting $K(h) = 1- W_y(h)$, we have $\dot{K}(t) = \Phi_y(h_x(t))K(t)$. 

By taking the integration of the above equation on $[t,T]$ with $t_1<t<T$, one has $ \int_{{K(h_x(t))}}^{{K(h_x(T))}} {\frac{{dK}}{K}} = \int_t^{\rm{T}} {{\Phi _y}({h_x}(s))} {\mkern 1mu} ds$ and so $K(h_x(T))  = K(h_x(t)){e^{\int_t^T {{\Phi _y}({h_x}(s))} {\kern 1pt} ds}}$, which shows that 
\begin{equation*}
	1 - {W_y}(h_x(t))  = [1 - {W_y}(h_x(T))]{e^{ - \int_{{t}}^T \Phi_y(h_x(s)){\kern 1pt} ds}}.
\end{equation*}
Letting $t \to t_1$ leads to $ \mathop {\lim }\limits_{t \to {t_1}}1 - {W_y}(h_x(t)) = 0$ and $ \mathop {\lim }\limits_{t \to {t_1}} (1 - {W_y}(h_x(T))){e^{ - \int_{{t}}^T  \Phi_y(h_x(s)) {\kern 1pt} ds}} > 0$, which is a contradiction. Thus, if $h_x(T)\in D_{W_y}$, we have $h_x(t) \in D_{W_y}$ for any $t\in [0,T)$.
\end{proof}

\subsection{Proof of Proposition \ref{prop:robustness}}
\label{app:pro6}
\begin{proof}
	By Proposition \ref{propos:4}, we have $W_y(h_x(0)) < 1$ so that $\varepsilon^* > 0$. The distance between the clean and perturbed features is bounded by the Lipschitz continuity of $\phi$
	\begin{equation*}
		\|h_x(0) - h_{x'}(0)\|  \leq L_\phi \|\delta\|.
	\end{equation*}
	Similarly, the change in the Lyapunov value is bounded by the Lipschitz continuity of $W_y$:
	\begin{equation*}
		\| W_y(h_x(0)) - W_y(h_{x'}(0)) \| \leq L_W L_\phi \|\delta\|.
	\end{equation*}
	Therefore, we have an upper bound for $W_y(h_{x'}(0))$:
	\begin{align*}
		W_y(h_{x'}(0)) &\leq W_y(h_x(0)) + L_W L_\phi \|\delta\| \\
		&< W_y(h_x(0)) + L_W L_\phi \varepsilon^* = W_y(h_x(0)) + (1 - W_y(h_x(0))) = 1.
	\end{align*}
	The strict inequality $W_y(h_{x'}(0)) < 1$ implies $h_{x'}(0) \in D_{W_y}$. By Proposition \ref{propos:2}, starting from $h_{x'}(0) \in D_{W_y}$, the entire trajectory remains in $D_{W_y}$ and converges to the equilibrium $c_y$. Thus, $h_{x'}(T) \in D_{W_y}$. 
\end{proof}

\section{Additional Experiments}
\label{app:exp}
\begin{figure*}[ht] 
	\centering
	\includegraphics[width=\textwidth]{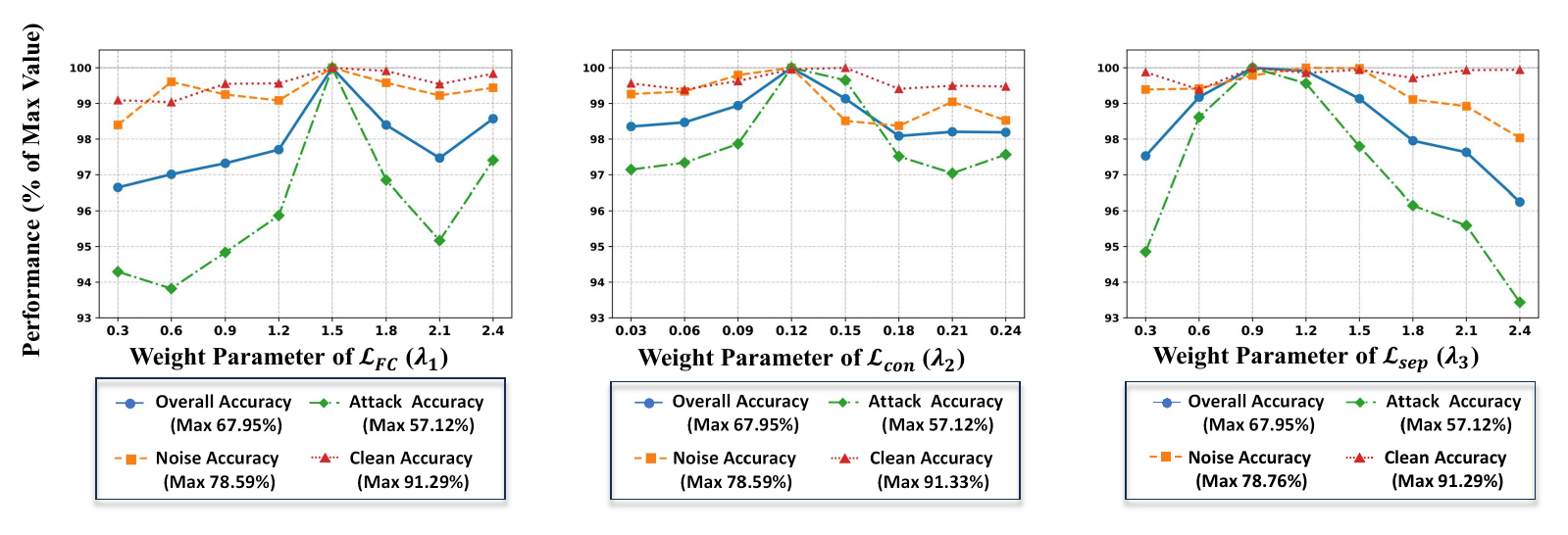} 
	\caption{Performance comparison of different loss weight configurations normalized to maximum values. The x-axis represents eight weight configurations, while the y-axis shows relative performance percentage. Four metrics are evaluated: overall accuracy (blue solid line), noise robustness (orange dashed line), adversarial robustness (green dash-dot line), and clean accuracy (red dotted line). Performance is normalized to the maximum value achieved by any configuration for fair comparison.} 
	\label{fig:crue}
\end{figure*}

\noindent \textbf{Loss weight analysis.} We further analyze the impact of loss weights $\lambda_1$, $\lambda_2$, and $\lambda_3$ on CIFAR-10, as visualized in Figure \ref{fig:crue}. Despite clean accuracy is the most stable across all weight variations, it shows its strongest correlation with $\lambda_1$ (the weight of the auxiliary cross-entropy loss $\mathcal{L}_{FC}$). This suggest that maintaining a strong supervisory signal from the standard classification head is beneficial for pure accuracy. For adversarial robustness, the all weight are crucial for achieving high robustness against adversarial attacks. Notably, $\lambda_3$ appears to offer a slightly wider optimal range, suggesting that actively enlarging the margin between basins is effective for adversarial defense once approximate consistency is enforced. Robustness to stochastic corruptions is significantly influenced by both $\lambda_2$ and $\lambda_3$. These findings validate the necessity of all three components in our tripartite loss and provide practical guidance for balancing them.

\begin{table*}[ht]
	\centering
	\caption{Ablation study of ODE solver configurations and integration time $T$ with corresponding discretization steps $\Gamma$ on CIFAR-10 ($\varepsilon=8/255$). ``Original'' denotes clean accuracy. The ``Avg'' column reports the average robustness accuracy over the ten corruption and attack types (excluding clean accuracy). Unit: \%.}
	\label{table:ab2}
	\begin{tabular}{lcccccccccccc}
		\toprule
		\multicolumn{1}{c}{\textbf{Configuration}} & \textbf{Orig} & \textbf{Gaussian} & \textbf{Glass} & \textbf{Shot} & \textbf{Speckle} & \textbf{Motion} & \textbf{FGSM} & \textbf{BIM} & \textbf{PGD} & \textbf{APGD} & \textbf{Jitter} & \textbf{Avg} \\\midrule
		Dopri5 (rho=0.001)                   & 91.31            & 71.53             & 76.09          & 66.82         & 78.84            & 83.18           & 58.12         & 60.97        & 75.25        & 58.42         & 44.92           & 67.41       \\
		Dopri5 (rho=0.1)                     & 91.13            & 70.02             & 76.57          & 66.10         & 78.64           & 82.61           & 55.67         & 58.52        & 74.90        & 56.77         & 43.62           & 66.34        \\
		RK4 (stepsize=0.1 )                 & 91.08            & 69.70             & 76.22          & 65.31         & 78.26           & 82.36           & 52.52         & 58.32        & 74.02        & 56.62         & 43.40           & 65.67        \\
		Eulr (stepsize=0.1)                 & 90.83            & 69.11             & 75.71          & 64.62         & 77.72            & 81.89           & 52.97         & 57.89        & 73.82        & 56.21         & 43.05           & 65.30     \\ \midrule 
		
		Ours (T=1, $\Gamma$=10) & 91.29 & 71.06 & 76.43 & 66.71 & 78.80 & 82.86 & 56.83 & 59.94 & 75.01 & 58.03 & 44.87 & 67.05\\ 	\midrule
		$T=2,\Gamma=15$  & 91.12 & 71.06 & 75.40 & 66.83 & 78.69 & 82.35 & 55.34 & 58.99 & 74.17 & 56.75 & 43.37 & 66.30 \\
		$T=5,\Gamma=30$  & 91.04 & 69.25 & 74.70 & 65.97 & 77.29 & 81.38 & 54.15 & 57.16 & 73.24 & 56.25 & 43.16 & 65.26 \\
		$T=7,\Gamma=40$  & 90.57 & 72.20 & 76.23 & 67.35 & 78.34 & 82.04 & 55.52 & 59.16 & 74.55 & 57.10 & 45.19 & 66.77 \\
		$T=10,\Gamma=60$ & 90.05 & 72.49 & 76.90 & 68.48 & 79.61 & 82.95 & 56.92 & 60.20 & 75.73 & 58.11 & 45.38 & 67.68 \\  
		\bottomrule    
	\end{tabular}
\end{table*}

\noindent \textbf{Solver configuration analysis.} We evaluate the impact of ODE solver on Zubov-Net using CIFAR-10. Four configurations are compared: Dopri5 with tolerances 0.001 and 0.1, RK4 with step size 0.1, and Euler with step size 0.1. As solver accuracy decreases, both clean accuracy and robustness degrade monotonically. As shown in Table \ref{table:ab2}, the highest-precision setting (Dopri5, rho=0.001) achieves the best average robustness accuracy (67.41\%), outperforming the lowest-precision Euler by 2.01\%. Our standard configuration uses Dopri5 with rho=0.1 during training and rho=0.001 during testing, which balances computational efficiency and robustness.

\noindent \textbf{Time and discretization analysis.} We further evaluate the impact of integration time $T$ and discretization resolution $\Lambda$ (denoted as $\Gamma$ in Table \ref{table:ab2}).  Our standard configuration ($T=1$, $\Gamma=10$) achieves a clean accuracy of 91.29\% and an average robustness accuracy of 67.05\%.  When $T$ and $\Gamma$ increase from $T=1$ to $T=10$, the clean accuracy decreases from 91.29\% to 90.05\%. However, the average robustness accuracy varies non-monotonically: it decreases from 67.05\% at $T=1$ to 66.77\% at $T=5$, then rises to 67.68\% at $T=10$. Among these settings, $T=10$ and $\Gamma=60$ yield the highest average robustness accuracy, though with a 1.24\% drop in clean accuracy relative to our standard configuration. Our configuration maintains competitive robustness with minimal clean accuracy degradation, and requires less computational cost than configurations with larger $T$ and $\Gamma$.

\begin{figure*}[!t] 
	\centering
	\includegraphics[width=\textwidth]{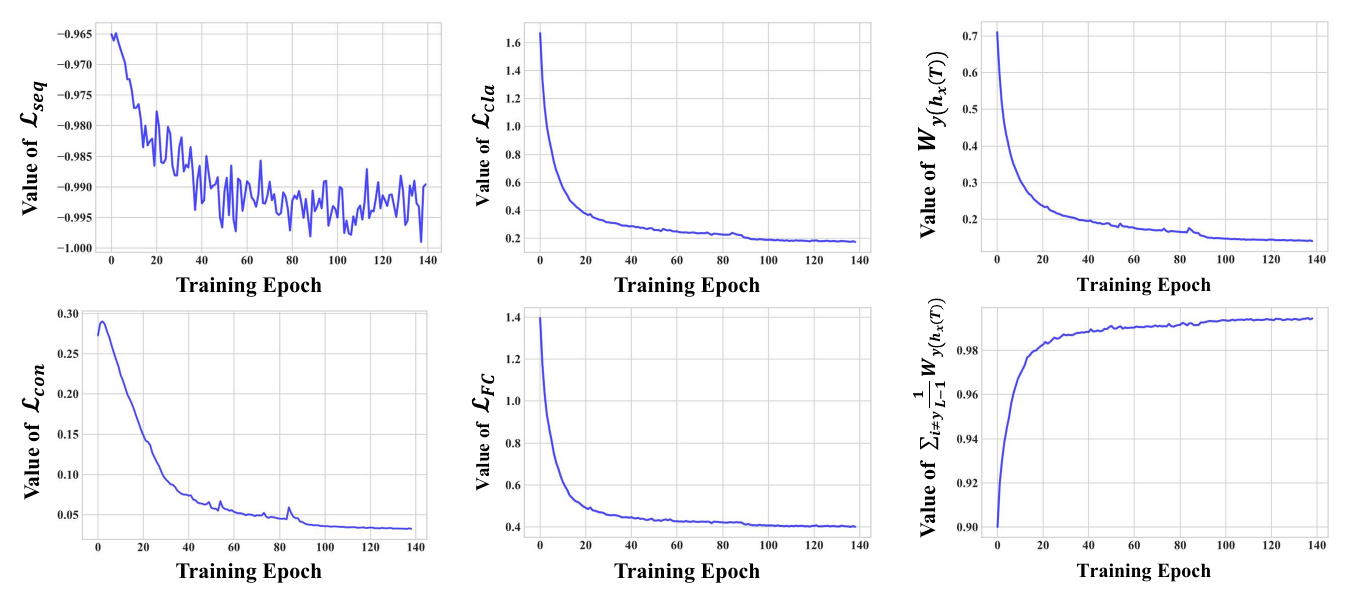} 
	\caption{Training dynamics of loss components and Lyapunov function values on CIFAR-10. It contains that separation loss $\mathcal{L}_{sep}$, classification loss $\mathcal{L}_{cla}$, consistency loss $\mathcal{L}_{con}$, auxiliary cross-entropy loss $\mathcal{L}_{FC}$, the target class Lyapunov function value, and the average of Lyapunov function values for non-target classes.  } 
	\label{fig:training_dynamics}
\end{figure*}
\noindent \textbf{Training dynamics analysis.} As shown in Figure \ref{fig:training_dynamics}, the classification loss of Lyapunov classifier $\mathcal{L}_{cla}$ converges steadily, which shows that he prescribed regions of attraction (PRoAs) are progressively aligned with the discriminative structure of data distribution. Moreover, the rapid minimization of the separation loss $\mathcal{L}_{sep}$, alongside the opposing trends of the target- and non-target-class Lyapunov function values, provides direct evidence of the active basin separation and geometric shaping during training. The steady decrease in the consistency loss $\mathcal{L}_{con}$ around a small value $0.03$ demonstrates the progressive satisfaction of the Zubov-driven matching condition. This close alignment between PRoAs and RoAs, despite the non-zero residual, is sufficient to drive the robustness improvements observed in our experiments. Together, these dynamics show that the tripartite loss enables simultaneous optimization for classification, attraction basin separation, and approximate satisfaction of the Zubov stability condition.

\end{document}